\documentclass{article}

\let\originaladdcontentsline\addcontentsline

\usepackage[preprint]{colm2026_conference}
\usepackage{amsmath,amssymb,amsfonts}
\usepackage{amsthm}
\usepackage{booktabs}
\usepackage{multirow}
\usepackage{graphicx}
\usepackage{subcaption}
\usepackage{hyperref}
\usepackage{url}
\usepackage{xcolor}
\usepackage{colortbl}
\usepackage{microtype}
\usepackage{algorithm}
\usepackage{algpseudocode}
\usepackage{wrapfig}
\usepackage{enumitem}
\usepackage{titletoc}
\usepackage{lineno}

\definecolor{bestcolor}{RGB}{230,245,230}
\definecolor{ourcolor}{RGB}{240,240,255}
\definecolor{darkblue}{rgb}{0, 0, 0.5}
\hypersetup{colorlinks=true, citecolor=darkblue, linkcolor=darkblue, urlcolor=darkblue}

\theoremstyle{definition}
\newtheorem{definition}{Definition}
\theoremstyle{plain}
\newtheorem{lemma}{Lemma}
\newtheorem{proposition}{Proposition}
\title{Respecting Self-Uncertainty in On-Policy Self-Distillation for Efficient LLM Reasoning}

\author{%$\footnotemark[1] \quad
  Junlong Ke$^{2}$\footnotemark[1] \quad
  Zichen Wen$^{1}$\footnotemark[1] \quad
  Weijia Li$^{2, 3}$ \quad
  Conghui He$^{3}$\footnotemark[2] \quad
  Linfeng Zhang$^{1}$\footnotemark[2]  \\
  $^{1}$Shanghai Jiao Tong University \quad
  $^{2}$Tsinghua University \quad
  $^{3}$Shanghai AI Laboratory \quad
}

\begin{document}

\ifcolmsubmission
\linenumbers
\fi

\maketitle
\lhead{Preprint.}

{
\renewcommand{\thefootnote}{\fnsymbol{footnote}}
\footnotetext[1]{Equal contribution.}
\footnotetext[2]{Corresponding authors.}
}
\begin{abstract}
On-policy self-distillation trains a reasoning model on its own rollouts while a teacher, often the same model conditioned on privileged context, provides dense token-level supervision. Existing objectives typically weight the teacher's token-level signal uniformly across a chain-of-thought sequence, despite substantial variation in the entropy of the teacher's predictive distribution. We propose \textbf{EGRSD} (Entropy-Guided Reinforced Self-Distillation), which unifies token-level updates through three signals: a reward-grounded \emph{direction}, a teacher--student likelihood-ratio \emph{magnitude}, and the proposed teacher-entropy \emph{confidence} gate that down-weights high-entropy token positions while maintaining a nonzero lower bound on every token weight. We further introduce \textbf{CL-EGRSD}, a causal-lookahead variant that distinguishes sustained high-entropy spans from transient high-entropy positions whose following context rapidly becomes low entropy. Experiments with Qwen3-4B and Qwen3-8B in thinking mode show that EGRSD and CL-EGRSD advance the accuracy--length frontier among the compared trainable methods. 
\end{abstract}
% On-policy self-distillation trains a reasoning model on its own rollouts while a teacher, often the same model conditioned on privileged context, provides dense token-level supervision. Existing objectives typically weight the teacher's token-level signal uniformly across a chain-of-thought sequence, despite substantial variation in the entropy of the teacher's predictive distribution. We propose **EGRSD** (Entropy-Guided Reinforced Self-Distillation), which unifies token-level updates through three signals: a reward-grounded *direction*, a teacher–student likelihood-ratio *magnitude*, and the proposed teacher-entropy *confidence* gate that down-weights high-entropy token positions while maintaining a nonzero lower bound on every token weight. We further introduce **CL-EGRSD**, a causal-lookahead variant that distinguishes sustained high-entropy spans from transient high-entropy positions whose following context rapidly becomes low entropy. Experiments with Qwen3-4B and Qwen3-8B in thinking mode show that EGRSD and CL-EGRSD advance the accuracy–length frontier among the compared trainable methods.

% \clearpage

\section{Introduction}
\label{sec:intro}

Recent large language models exhibit strong multi-step inference capabilities, but chain-of-thought (CoT) reasoning~\citep{wei2022chain,openai2024o1} often entails the excessive generation of intermediate reasoning tokens. Reasoning-optimized checkpoints frequently produce redundant verification loops, simulated self-correction markers, and repeated intermediate derivations. This verbosity increases inference latency and cost, motivating methods that preserve reasoning accuracy while reducing unnecessary computation.

On-policy self-distillation is a natural approach to this problem. Instead of imitating fixed offline demonstrations, the student samples its own reasoning trajectories and receives dense token-level supervision from a privileged teacher~\citep{opsd,opsdc,yang2026rlsd}. The teacher may be the same base model conditioned on a reference solution, so the student learns from feedback on trajectories it actually visits. This avoids the train--test mismatch of offline distillation and gives a supervision signal at every completion token rather than only at the final answer.

Dense feedback, however, is not the same as reliable feedback. A reasoning completion contains heterogeneous token positions: some are low-entropy deterministic computation (arithmetic continuation, expression simplification, equation closure), while others are high-entropy branching points where the distribution assigns probability mass to multiple valid continuations (induction versus enumeration, revising a previous derivation, or discourse-level transitions). Prior work has observed analogous heterogeneity in code self-distillation~\citep{zhang2026ssd}. A privileged teacher can be sharply peaked at the former positions and diffuse at the latter, and uniformly weighting these signals risks over-emphasizing high-variance supervision. In long mathematical reasoning, we additionally identify a third regime: \emph{transient} high-entropy positions whose following context rapidly becomes low-entropy. 
\begin{wrapfigure}{r}{0.4\linewidth}
\centering
% \vspace{-9pt}
\includegraphics[width=\linewidth]{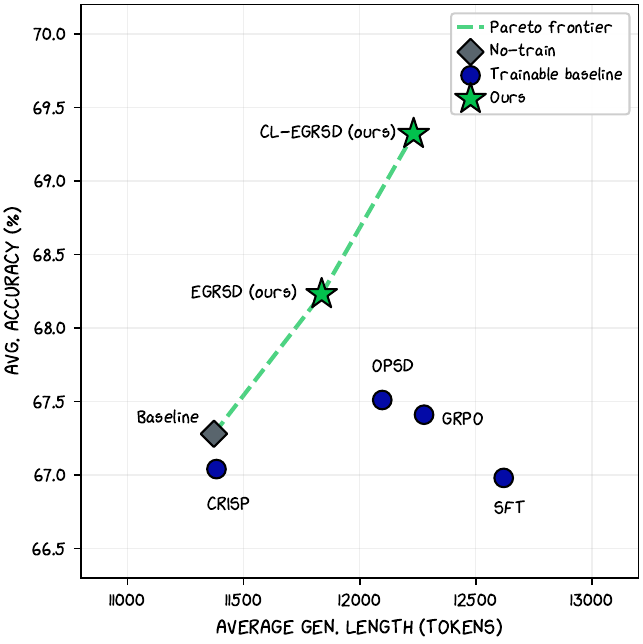}
\vspace{-21pt}
\caption{Accuracy--length trade-off on Qwen3-8B: EGRSD and CL-EGRSD (ours) extend the Pareto frontier. All trainable baselines are dominated.}
\label{fig:teaser}
\vspace{-20pt}
\end{wrapfigure}
These are strategy-shift \emph{pivots}, not sustained branching \emph{forks}, and blindly suppressing all high-entropy tokens would destroy the transition signal that the pivots carry.

This issue is especially important in self-distillation. Unlike offline distillation with a superior external model, the privileged teacher here is the base policy conditioned on augmented context. Its entropy therefore measures the concentration of the teacher's next-token predictive distribution under the privileged view. 
We refer to this distributional concentration as \emph{teacher confidence}. Because the teacher is conditioned on privileged information, this confidence acts as a practical proxy for the reliability of token-level supervision. Prior analysis further suggests that \emph{suppressing} high-entropy teacher tokens can degrade reasoning~\citep{kim2026epistemic}, motivating a non-zero lower bound in any confidence gate we introduce. Together with the outcome-reward sign and the teacher--student log-likelihood ratio used by recent direction-aware objectives, teacher predictive entropy provides a third, previously unused signal for token-level self-distillation. Concurrent work (RLSD~\citep{yang2026rlsd}) demonstrates the value of decoupling direction and magnitude. EGRSD adds the entropy-based confidence component on top of this decomposition.

We propose \textbf{Entropy-Guided Reinforced Self-Distillation} (EGRSD). For each token in a rollout, EGRSD computes the privileged-teacher entropy, normalizes it within the minibatch, and down-weights the direction-aware token update by a multiplicative confidence gate $\omega_{i,t}\in[0.1,1]$ (formalized in \S\ref{sec:egrsd}). Low-entropy computation tokens pass through with nearly full weight, while high-entropy positions are attenuated but retain a non-zero floor so branching positions with valid continuation diversity are not discarded.

A second variant, \textbf{CL-EGRSD}, addresses transient high-entropy transition points. Some locally high-entropy tokens initiate a branch whose subsequent continuation rapidly becomes low-entropy. CL-EGRSD replaces instantaneous entropy with the minimum entropy over a short causal future window, separating sustained high-entropy spans from transient strategy-shift positions.

On Qwen3-4B and Qwen3-8B, the resulting confidence-gated update advances the accuracy--length frontier among the compared trainable methods (Figure~\ref{fig:teaser}). Ablations show that moderate entropy attenuation gives the most stable performance, and that lookahead helps most on the larger model, where pivots are easier to exploit.

Our contributions are:
\begin{itemize}[leftmargin=10pt, topsep=0pt, itemsep=1pt, partopsep=1pt, parsep=1pt]
    \item We identify teacher predictive entropy as a missing third signal in on-policy self-distillation, complementing outcome-reward direction and teacher--student magnitude.
    \item We instantiate this signal as \textbf{EGRSD}: a minimal extension of direction-aware self-distillation that gates the token update by the privileged teacher's entropy with a non-zero lower bound on every token weight.
    \item We extend EGRSD to \textbf{CL-EGRSD}, a causal-lookahead variant that preserves transient high-entropy \emph{pivot} tokens whose uncertainty resolves within a short future window, and validate both methods with main results, mechanism diagnostics, and ablations on Qwen3-4B and Qwen3-8B.
\end{itemize}

\section{Related work}
\label{sec:related}

We focus on the two lines of work most directly relevant to EGRSD. Additional context on long-form reasoning compression, RLVR-style token-level credit assignment, and a full method-by-method positioning comparison is deferred to Appendix~\ref{app:related}.

\paragraph{On-policy distillation and privileged self-distillation.}
On-policy distillation addresses the train--test mismatch of offline distillation by training on trajectories sampled from the student while a teacher provides dense token-level feedback~\citep{song2026opdsurvey}. OPSD adapts this idea to self-distillation: the same model acts as student and privileged teacher, with the teacher conditioned on additional information such as reference solutions~\citep{opsd}. This design removes the need for a separate large teacher and gives dense supervision on the student's own rollouts. Recent work further explores competence-aware weighting~\citep{xu2026paced}, consensus gating under privileged context~\citep{stein2026gates}, and compression-oriented variants such as CRISP~\citep{opsdc}. A common blind spot remains: existing OPSD-style objectives provide dense token-level targets but do not explicitly account for the concentration of the teacher distribution at each token. Concurrent work on direction-aware self-distillation (RLSD~\citep{yang2026rlsd}) couples outcome-reward direction with a teacher--student likelihood-ratio magnitude but leaves teacher confidence unused. EGRSD reuses the same coupling and additionally weights each token by the privileged teacher's predictive entropy.

\paragraph{Uncertainty, selectivity, and teacher confidence.}
Not all dense supervision is equally useful. Selective instruction-tuning and process supervision show that fine-grained filtering can improve how models learn from completions~\citep{li2023selective,lightman2023let}, and a complementary analysis~\citep{kim2026epistemic} finds that high-entropy teacher tokens carry uncertainty signaling that should be preserved rather than flattened. This finding motivates our non-zero floor on $\omega_{i,t}$. Concurrent with our work, SSD~\citep{zhang2026ssd} observes that code generation interleaves \emph{lock} positions (unambiguous continuations) with \emph{fork} positions (multiple plausible continuations), and reshapes token distributions differently at the two position types by temperature-truncated sampling. Entropy-aware on-policy distillation~\citep{jin2026entropyaware} instead mixes reverse and forward KL terms to alter the distillation geometry. EGRSD uses teacher entropy differently from both: it is a multiplicative confidence gate on the token-level RLSD update, making the lock/fork heterogeneity explicit without switching divergence objectives or training a separate uncertainty estimator. CL-EGRSD additionally rescues transient high-entropy \emph{pivot} positions whose continuation rapidly becomes confident, a regime not covered by SSD's two-category formulation.

\section{Background}
\label{sec:background}

We summarize the two objectives EGRSD builds on. Derivations, stop-gradient rationale, and advantage-whitening details are deferred to Appendix~\ref{app:background_full}.

\paragraph{Notation.}
Let $y_i=(y_{i,1},\ldots,y_{i,T_i})$ denote the $i$-th on-policy rollout sampled from the student policy $p_\theta$ on a problem $x$, with rollout positions $\mathcal{C}_i$ and mask $m_{i,t}=\mathbb{1}[t\in\mathcal{C}_i]$. The \emph{teacher} $p_T$ is the initial policy held fixed throughout training, conditioned on the privileged context $(x,s^\star)$ where $s^\star$ is a reference solution. The \emph{student} $p_S=p_\theta$ is initialized from the same weights, is the only trainable component, and is conditioned only on $(x)$.

\paragraph{On-policy self-distillation (OPSD).}
OPSD~\citep{opsd} aligns the student with the teacher at every rollout position:
\begin{equation}
\label{eq:opsd_loss}
    \mathcal{L}_\mathrm{OPSD}
    = \sum_{i,t:\,t\in\mathcal{C}_i}
    \mathrm{KL}\!\big(p_T(\cdot \mid x,s^\star,y_{i,<t})
    \,\|\, p_\theta(\cdot \mid x,y_{i,<t})\big),
\end{equation}
and CRISP~\citep{opsdc} adapts the same per-token alignment framework toward compression. Neither weights tokens by teacher confidence.

\paragraph{Direction-aware self-distillation (RLSD).}
RLSD~\citep{yang2026rlsd} replaces the teacher-driven update direction with an outcome-reward one. Each rollout receives a length-shaped reward $r_i = \mathbb{1}[y_i\text{ correct}]\cdot(1+\beta_L(1-|y_i|/L_\mathrm{max}))$ that is whitened into a sequence-level advantage $A_i$ with direction $D_i=\mathrm{sign}(A_i)$, shared across tokens in a rollout. The teacher--student log-ratio $\delta_{i,t}=\log p_T(y_{i,t}\mid x,s^\star,y_{i,<t})-\log p_S(y_{i,t}\mid x,y_{i,<t})$ is clipped into a multiplicative magnitude
\begin{equation}
\label{eq:rlsd_weight}
    w_{i,t} = \mathrm{clip}\!\big(\exp(D_i\,\delta_{i,t}),\,1-\varepsilon,\,1+\varepsilon\big),
    \qquad \varepsilon = 0.2,
\end{equation}
with stop-gradient on $p_S$ so that only $\log p_\theta$ in the final loss receives gradient. The RLSD loss is
\begin{equation}
\label{eq:rlsd_loss}
    \mathcal{L}_\mathrm{RLSD}
    = -\tfrac{1}{\sum_{i,t} m_{i,t}}
    \sum_{i,t} m_{i,t}\, A_i\, w_{i,t}\,
    \log p_\theta(y_{i,t}\mid x,y_{i,<t}).
\end{equation}
RLSD therefore couples reward-driven direction with a teacher--student likelihood ratio but treats every token under that ratio as equally informative. EGRSD modulates the same coupling by the teacher's predictive entropy.

\section{Method}
\label{sec:method}

\subsection{Motivation: a remaining confidence gap}
\label{sec:motivation}
\begin{figure*}[!t]
\centering
\includegraphics[width=\linewidth]{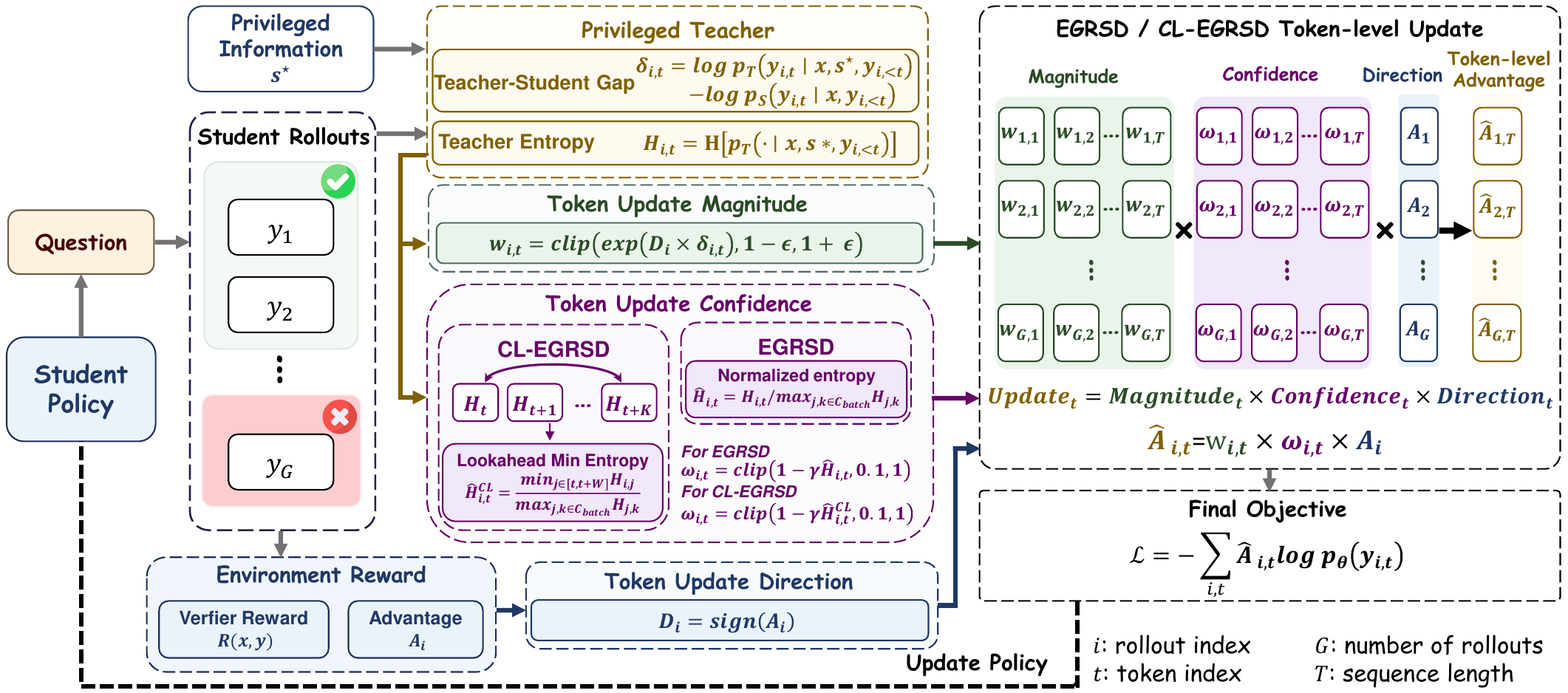}
\caption{Overview of the proposed method. The token-level update multiplies three signals: \emph{direction} from the sequence-level outcome-reward advantage $D_i=\mathrm{sign}(A_i)$ (shared across a rollout), \emph{magnitude} $w_{i,t}$ from the teacher--student likelihood ratio, and \emph{confidence} $\omega_{i,t}$ from a monotone gate of the privileged teacher's predictive entropy. CL-EGRSD replaces the instantaneous entropy inside the gate with a short-horizon causal-lookahead minimum to preserve transient pivot positions.}
\label{fig:method_schematic}
\vspace{-10pt}
\end{figure*}

The teacher--student ratio $w_{i,t}$ says whether the privileged view increases or decreases support for the sampled token. It does not say whether the privileged teacher distribution is concentrated. We use teacher confidence in an operational sense: the concentration of the privileged teacher's predictive distribution, measured through token-level entropy. Because the teacher is conditioned on augmented context, this confidence signal serves as a practical proxy for the reliability of token-level supervision without attributing cognitive states to the model. Direction-aware self-distillation can still assign large token-level magnitude to positions where the teacher distribution is diffuse. We therefore view token-level self-distillation as a three-signal problem. Table~\ref{tab:three_signals} summarizes how existing on-policy self-distillation objectives populate this picture. EGRSD targets the remaining confidence gap by introducing the teacher-entropy gate.

\begin{table*}[!h]
\centering
\small
\caption{Signals used by families of on-policy self-distillation objectives. EGRSD augments the direction--magnitude decomposition with an explicit teacher-confidence signal.}
\label{tab:three_signals}
\begin{tabular}{l c c c}
\toprule
\textbf{Method} & \textbf{Direction} & \textbf{Magnitude} & \textbf{Confidence} \\
\midrule
GRPO & outcome reward & uniform & none \\
OPSD & teacher & teacher distribution & none \\
RLSD & outcome reward & teacher--student ratio & none \\
\textbf{EGRSD} & outcome reward & teacher--student ratio & teacher entropy \\
\bottomrule
\end{tabular}
\end{table*}

Before formalizing this gate, it helps to fix intuition about what the teacher-entropy signal captures. We distinguish three token regimes by predictive-distribution concentration, illustrated on a real reasoning example in Figure~\ref{fig:token_entropy_motivation}:
\begin{itemize}[leftmargin=10pt, topsep=0pt, itemsep=1pt, partopsep=1pt, parsep=1pt]
    \item \textbf{Lock.} Low instantaneous entropy, with the teacher sharply peaked. These positions correspond to deterministic computation (arithmetic, unit manipulation, equation continuation) and provide reliable token-level supervision.
    \item \textbf{Fork.} High instantaneous entropy that \emph{stays} high over the following tokens, with the teacher diffuse and multiple continuations plausible. Sustained branching of this kind can let the teacher--student likelihood ratio alone overstate the evidence.
    \item \textbf{Pivot.} High instantaneous entropy that \emph{resolves} to low entropy within a short future window, where the teacher is locally uncertain but immediately recommits. Such transitions, often cued by discourse markers, are the target of the causal-lookahead variant below.
\end{itemize}

\begin{figure}[t]
\centering
\includegraphics[trim=13.6bp 8.1bp 15.3bp 4.7bp,clip,width=\linewidth]{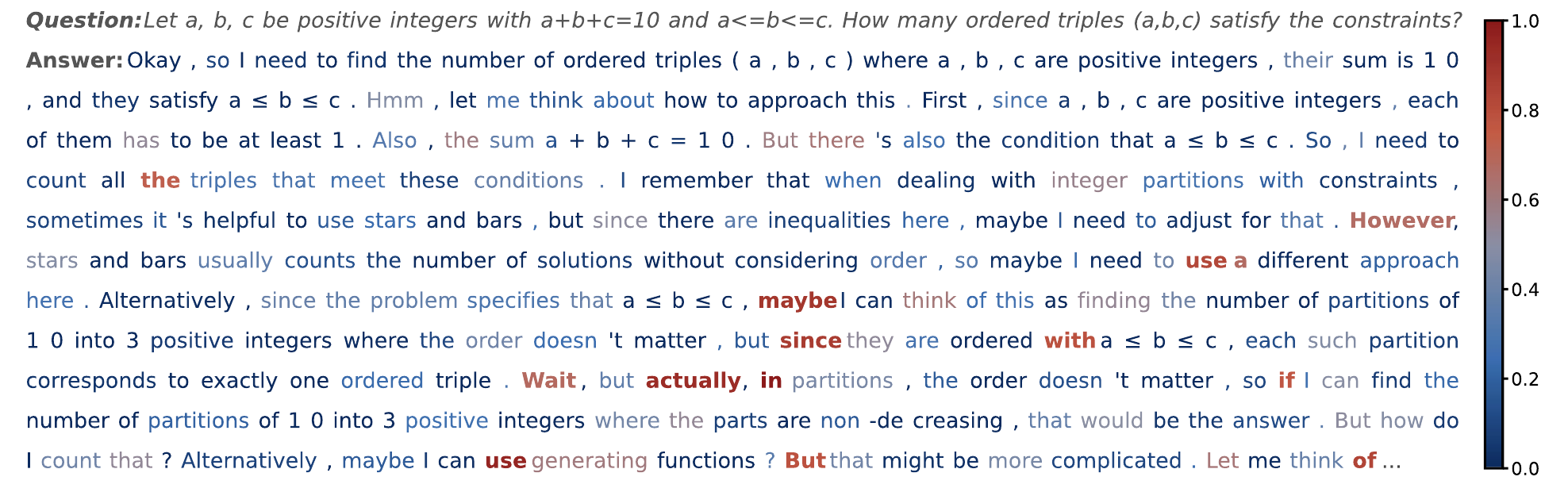}
\caption{Per-token predictive entropy on a representative reasoning trace (Qwen3-4B). Tokens are color-coded by the teacher's entropy and bolded at the top ${\sim}35\%$ of positions. High-entropy tokens concentrate at discourse transitions and strategy shifts. Low entropy marks routine computation. Appendix~\ref{app:qualitative_pivots} gives an annotated version with $\widehat{H}_{i,t}$ and $\widehat{H}_{i,t}^{\mathrm{CL}}$.}
\label{fig:token_entropy_motivation}
\end{figure}

Uniformly trusting all tokens over-weights fork evidence, whereas uniformly suppressing all high-entropy tokens destroys pivot evidence. EGRSD~(\S\ref{sec:egrsd}) handles the first failure mode by attenuating high-entropy positions while retaining a non-zero floor, and CL-EGRSD~(\S\ref{sec:clegrsd}) handles the second by swapping in a short-horizon future entropy so that pivots are selectively restored.

Figure~\ref{fig:method_schematic} summarizes the resulting token-level update. The method is intentionally minimal: it changes neither the rollout distribution nor the reward definition, and adds only a multiplicative entropy-based confidence factor on top of the existing direction-aware update.

\subsection{Entropy-guided reinforced self-distillation}
\label{sec:egrsd}

For each rollout $i$ and position $t\in\mathcal{C}_i$ we compute the privileged-teacher entropy
\begin{equation}
    H_{i,t} =
    -\sum_{v\in\mathcal{V}}
    p_T(v\mid x,s^\star,y_{i,<t})
    \log p_T(v\mid x,s^\star,y_{i,<t}).
\end{equation}
Let $\mathcal{C}_\mathrm{batch}=\bigcup_i \{(i,t):t\in\mathcal{C}_i\}$ denote all rollout positions in the current minibatch. We normalize by the batch-global maximum
\begin{equation}
    \widehat{H}_{i,t} =
    \frac{H_{i,t}}{\displaystyle\max_{(j,k)\in\mathcal{C}_\mathrm{batch}} H_{j,k}}.
\end{equation}
In the implementation we additionally lower-bound this denominator by $1$ nat to avoid numerical instability on low-entropy minibatches. The confidence gate is
\begin{equation}
    \omega_{i,t} =
    \mathrm{clip}\!\left(1-\gamma\,\widehat{H}_{i,t},\,0.1,\,1\right).
    \label{eq:entropy_weight_improved}
\end{equation}
The coefficient $\gamma$ controls the sensitivity of the gate to teacher entropy. Larger $\gamma$ applies stronger attenuation to high-entropy positions. Setting $\gamma=0$ recovers the direction-aware baseline without the entropy gate. The lower bound prevents high-entropy positions from being removed entirely, since some high-entropy positions reflect valid branching in the predictive distribution rather than spurious supervision. This choice is supported by the empirical finding of~\citet{kim2026epistemic} that hard suppression of high-entropy teacher tokens degrades reasoning.

\paragraph{Geometric interpretation of the linear gate.}
The form $\omega_{i,t} = 1 - \gamma\,\widehat{H}_{i,t}$ is the endpoint chord (the secant at $\widehat{H}\in\{0,1\}$) of the worst-case shrinkage bound $\omega^\star_{a_0}(\widehat{H})=1/(1+a_0\widehat{H})$ derived from the linear noise-to-signal proxy $\sigma^2/\mu^2\le a_0\widehat{H}$. Therefore $\gamma$ admits a geometric reading as the noise-to-signal ratio at maximum teacher entropy, $\gamma=\mathrm{NSR}_{\max}/(1+\mathrm{NSR}_{\max})$ (Appendix~\ref{app:linear_gate}).

EGRSD combines direction, magnitude, and confidence:
\begin{equation}
    \widehat{A}_{i,t} = A_i \cdot w_{i,t} \cdot \omega_{i,t},
    \qquad
    \mathcal{L}_\mathrm{EGRSD}
    =
    -\frac{1}{\sum_{i,t} m_{i,t}}
    \sum_{i,t} m_{i,t}\,
    \widehat{A}_{i,t}\,
    \log p_\theta(y_{i,t}\mid x,y_{i,<t}).
    \label{eq:egrsd_loss_improved}
\end{equation}
$\widehat{A}_{i,t}$ is a stop-gradient constant. The gradient flows only through $\log p_\theta$.

\subsection{Causal-lookahead EGRSD}
\label{sec:clegrsd}

Instantaneous entropy can conflate sustained high-entropy spans with transient transition points. A token may have high entropy because the sequence distribution branches locally, while the subsequent continuation rapidly becomes low entropy. CL-EGRSD therefore replaces $H_{i,t}$ with a causal future-window minimum before applying the same batch-global normalization:
\begin{equation}
    \widehat{H}_{i,t}^{\mathrm{CL}}
    =
    \frac{\displaystyle\min_{j\in[t,\,\min(t+W,T_i)]} H_{i,j}}{\displaystyle\max_{(j,k)\in\mathcal{C}_\mathrm{batch}} H_{j,k}},
    \qquad
    \omega_{i,t}^{\mathrm{CL}}
    =
    \mathrm{clip}\!\left(1-\gamma\,\widehat{H}_{i,t}^{\mathrm{CL}},\,0.1,\,1\right).
\end{equation}
The lookahead window is truncated at the sequence end via $\min(t+W,T_i)$. We take the minimum of the raw entropies within the window and reuse the same batch-global denominator. This preserves transient high-entropy positions that resolve within $W$ tokens and continues to attenuate sustained high-entropy spans.

\paragraph{Why minimum?}
Among causal smoothing filters $\phi(H_{i,t},\ldots,H_{i,t+W})$ that are per-argument monotone, conservative ($\phi\le H_{i,t}$), idempotent ($\phi(c,\ldots,c)=c$), and causal, any member satisfies $\phi\ge\min_{j\in[t,t+W]}H_{i,j}$, so the minimum is the extremal choice that maximizes weight recovery at pivot positions (those with high current entropy but low lookahead entropy), while leaving sustained high-entropy (fork) positions nearly untouched. The formal statement and proof are in Appendix~\ref{app:min_pool}.

\section{Experiments}
\label{sec:experiments}

\subsection{Setup}
\label{sec:setup}

\paragraph{Models and data.}
We evaluate Qwen3-4B and Qwen3-8B with thinking mode enabled~\citep{qwen2025qwen3}. Teacher and student share the backbone weights: the teacher's weights are not updated through LoRA, and the teacher is conditioned on the privileged reference solution $s^\star$, while the student uses the LoRA-adapted weights without $s^\star$. Our training data configuration matches that of OPSD~\citep{opsd}. The teacher receives $(x,s^\star)$ and the student receives $(x)$.

\paragraph{Training.}
We train all methods with AdamW~\citep{loshchilov2019decoupled} at a learning rate of $5\times10^{-6}$ in BF16 mixed precision, under an identical update budget. LoRA~\citep{hu2022lora} ($r{=}64$, $\alpha{=}128$, no dropout) is applied to each Transformer layer's attention projections (\texttt{q\_proj}, \texttt{k\_proj}, \texttt{v\_proj}, \texttt{o\_proj}) and MLP projections (\texttt{gate\_proj}, \texttt{up\_proj}, \texttt{down\_proj}). On-policy rollouts are generated with vLLM~\citep{kwon2023vllm} in colocate mode at temperature $1.1$, top-$p{=}0.95$, top-$k{=}20$, and a per-step completion cap of $1{,}024$ tokens. The teacher is kept frozen throughout training. Section~\ref{sec:teacher_update} ablates EMA and hard-copy update schedules and shows both underperform the frozen choice. Training dynamics (pre-clip gradient norm) are reported in Appendix~\ref{app:dynamics}. Additional training details are deferred to Appendix~\ref{app:setup}.

\paragraph{Evaluation.}
We evaluate on AIME 2024, AIME 2025, HMMT 2025, MATH-500, Minerva-Math, and GSM8K. For inference, we employ vLLM with the official Qwen3 chat template and \texttt{enable\_thinking=True}. We generate $K=4$ samples per prompt using a temperature of $1.0$, top-$p=0.95$, and a maximum of $32{,}768$ new tokens. To quantify AIME24 variance, results on that benchmark are averaged over five independent runs. The remaining benchmarks use a single run. We report avg@$K$ for each benchmark and define Avg. as the macro-average of avg@$K$ across the six benchmarks. Generation length is averaged across benchmarks and samples.

\paragraph{Compared methods.}
We compare against the no-train baseline, SFT, GRPO~\citep{shao2024deepseekmath}, OPSD~\citep{opsd}, and CRISP~\citep{opsdc}. %All trainable methods use the same training budget.

\subsection{Main comparison}
\label{sec:main_results}

Tables~\ref{tab:main_4b_improved} and~\ref{tab:main_8b_improved} show the main comparison, with all trainable methods sharing the same 100-step training budget. Additional variants and sweeps appear below and in the appendix. EGRSD and CL-EGRSD outperform all trainable baselines (SFT, GRPO~\citep{shao2024deepseekmath}, OPSD~\citep{opsd}, and CRISP~\citep{opsdc}) on both Qwen3-4B and Qwen3-8B. On 4B, EGRSD Pareto-dominates OPSD in both accuracy and length (Table~\ref{tab:efficiency_full}). On 8B, CL-EGRSD gives the highest observed Avg. in this comparison, with the largest per-benchmark gain on HMMT25 ($+7.40$ over the no-train baseline), at a generation length ($12{,}232$ tokens) comparable to OPSD's ($12{,}097$). All trainable baselines are Pareto-dominated in the accuracy--length plane, as visualized in Figure~\ref{fig:teaser}. The primary takeaway is that confidence-aware weighting advances the accuracy--length frontier. % in Figure~\ref{fig:token_efficiency_8b}(Appendix~\ref{app:efficiency_8b}). The primary takeaway is that confidence-aware weighting advances the accuracy--length frontier.

\begin{table*}[t]
\centering
\small
\caption{Results on \textbf{Qwen3-4B}. We express all evaluative metrics as percentages. The best and runner-up results are highlighted with \textbf{bold} and \underline{underline}, respectively.}
\label{tab:main_4b_improved}
\resizebox{\textwidth}{!}{
\begin{tabular}{l c c c c c c c}
\toprule
\textbf{Method} & \textbf{AIME24} & \textbf{AIME25} & \textbf{HMMT25} & \textbf{MATH500} & \textbf{Minerva} & \textbf{GSM8K} & \textbf{Avg.} \\
\midrule
Baseline (no-train)   & 73.02 & 65.00 & 41.56 & 86.93 & \underline{33.09} & 93.96 & 65.59 \\
SFT                   & 58.61 & 42.50 & 30.83 & 83.65 & 26.47 & 87.19 & 54.88 \\
GRPO~\citep{shao2024deepseekmath} & 73.06 & 63.33 & \textbf{45.00} & \underline{87.15} & 32.90 & 93.95 & 65.90 \\
OPSD~\citep{opsd}     & 74.48 & \underline{67.92} & 43.12 & 87.02 & \textbf{33.23} & 93.89 & 66.61 \\
CRISP~\citep{opsdc}   & 74.06 & 63.33 & 40.94 & 86.94 & 32.84 & \underline{94.01} & 65.35 \\
\midrule
\rowcolor{ourcolor}
\textbf{EGRSD} (ours)   & \textbf{77.92} & \textbf{68.75} & 42.92 & \textbf{87.17} & 32.81 & 93.88 & \textbf{67.24} \\
\rowcolor{ourcolor}
\textbf{CL-EGRSD} (ours) & \underline{77.08} & 67.08 & \underline{44.58} & 86.92 & 32.90 & \textbf{94.26} & \underline{67.14} \\
\bottomrule
\end{tabular}
}
\end{table*}

\begin{table*}[t]
\centering
\small
\caption{Results on \textbf{Qwen3-8B}. We express all evaluative metrics as percentages. The best and runner-up results are highlighted with \textbf{bold} and \underline{underline}, respectively.}
\label{tab:main_8b_improved}
\resizebox{\textwidth}{!}{
\begin{tabular}{l c c c c c c c}
\toprule
\textbf{Method} & \textbf{AIME24} & \textbf{AIME25} & \textbf{HMMT25} & \textbf{MATH500} & \textbf{Minerva} & \textbf{GSM8K} & \textbf{Avg.} \\
\midrule
Baseline (no-train)   & 75.11 & 67.71 & 45.10 & 86.92 & 34.32 & 94.53 & 67.28 \\
SFT                   & 73.61 & 66.67 & \underline{46.67} & 86.60 & 33.92 & 94.45 & 66.99 \\
GRPO~\citep{shao2024deepseekmath} & 75.28 & \textbf{71.67} & 41.67 & 86.60 & 34.65 & 94.58 & 67.41 \\
OPSD~\citep{opsd}     & 73.75 & 68.75 & 45.84 & \underline{87.11} & \underline{34.94} & \textbf{94.68} & 67.51 \\
CRISP~\citep{opsdc}   & 76.04 & 67.29 & 42.92 & 86.93 & 34.47 & 94.57 & 67.04 \\
\midrule
\rowcolor{ourcolor}
\textbf{EGRSD} (ours)   & \underline{76.67} & \underline{70.00} & 45.42 & \textbf{87.20} & \textbf{35.48} & \underline{94.61} & \underline{68.23} \\
\rowcolor{ourcolor}
\textbf{CL-EGRSD} (ours) & \textbf{77.22} & \underline{70.00} & \textbf{52.50} & 86.95 & 34.83 & 94.43 & \textbf{69.32} \\
\bottomrule

\end{tabular}
}
\end{table*}

\subsection{Ablations on entropy strength and lookahead}
\label{sec:ablations}

Table~\ref{tab:gamma_improved} isolates the entropy coefficient across four training snapshots on Qwen3-8B. Moderate attenuation is the most stable regime: $\gamma{=}0.1$ attains the highest single-snapshot Avg. of $68.32$ and $\gamma{=}0.3$ peaks at $68.23$ with a smoother trajectory, while very weak ($\gamma{=}0.0$) and strong ($\gamma{\geq}0.7$) settings trail at most snapshots. Moderate attenuation balances peak accuracy against snapshot-to-snapshot variance, consistent with retaining a nonzero lower bound for high-entropy positions.

Token efficiency $\mathrm{Eff}=\mathrm{Acc}(\%)/(\mathrm{AvgLen}/1000)$ on Qwen3-4B (Table~\ref{tab:efficiency_full}) gives a complementary picture. EGRSD ($\mathrm{Eff}{=}6.08$) and CL-EGRSD ($\mathrm{Eff}{=}6.06$) are the only trainable methods that improve over SFT ($\mathrm{Eff}{=}6.05$). GRPO, OPSD, and CRISP all produce longer generations without commensurate accuracy gains and fall to $\mathrm{Eff}{\leq}5.86$. SFT's nominal $6.05$ comes from a $\sim$1.9K-token compression paired with a ten-point accuracy drop rather than a true efficiency improvement.

\begin{table}[t]
\centering
\small
\begin{minipage}[t]{0.4\textwidth}
\centering
\caption{EGRSD $\gamma$ sweep on Qwen3-8B. Columns S1--S4 are four training snapshots, and $\gamma=0$ removes entropy weighting.}
\label{tab:gamma_improved}
\scalebox{0.955}{%
\begin{tabular}{c c c c c}
\toprule
$\gamma$ & \textbf{S1} & \textbf{S2} & \textbf{S3} & \textbf{S4} \\
\midrule
0.0   & 67.16 & \textbf{67.97} & 67.64 & 66.81 \\
% \rowcolor{bestcolor}
0.1   & 66.82 & 66.92 & 67.67 & \textbf{68.32} \\
0.2   & 67.00 & 66.89 & 67.39 & \textbf{67.79} \\
0.3   & 66.99 & 67.47 & 66.53 & \textbf{68.23} \\
0.4   & 67.53 & \textbf{67.72} & 67.03 & 66.82 \\
0.6   & 66.87  & 67.33 & 67.70 &  \textbf{68.06}\\
0.7   & 67.00 & 66.88 & 67.70 & \textbf{68.00} \\
0.8   & 66.85 & 66.85 & \textbf{67.78} & 67.23 \\
0.9   & 66.58 & 66.76 & 67.43 & \textbf{67.83} \\
\bottomrule
\end{tabular}}
\end{minipage}\hfill
\begin{minipage}[t]{0.56\textwidth}
\centering
\caption{Token efficiency on Qwen3-4B, defined as $\mathrm{Eff}=\mathrm{Acc}/(\mathrm{AvgLen}/1000)$ with accuracy in percent and length in thousands of tokens. EGRSD and CL-EGRSD are the only trainable methods that improve over SFT.}
\label{tab:efficiency_full}
\scalebox{1.0437}{%
\begin{tabular}{l c c c}
\toprule
\textbf{Method} & \textbf{Acc} & \textbf{AvgLen} & \textbf{Eff} \\
\midrule
Baseline            & 65.59 & 11{,}008 & 5.96 \\
SFT                 & 54.88 & \phantom{0}9{,}070 & 6.05 \\
GRPO~\citep{shao2024deepseekmath}  & 65.90 & 11{,}809 & 5.58 \\
OPSD~\citep{opsd}   & 66.61 & 11{,}787 & 5.65 \\
CRISP~\citep{opsdc} & 65.35 & 11{,}157 & 5.86 \\
\rowcolor{bestcolor}
\textbf{EGRSD (ours)} & {67.24} & {11{,}064} & \textbf{6.08} \\
\rowcolor{bestcolor}
\textbf{CL-EGRSD (ours)}     & 67.14 & 11{,}085 & \textbf{6.06} \\
\bottomrule
\end{tabular}}
\end{minipage}
\end{table}

\begin{table}[t]
\centering
\small
\caption{Lookahead ablation on Qwen3-8B. We express all evaluative metrics as percentages. The best and runner-up results are highlighted with \textbf{bold} and \underline{underline}, respectively.}
\label{tab:window_improved}
\resizebox{\textwidth}{!}{
\begin{tabular}{l c c c c c c c c c}
\toprule
\textbf{Variant} & \textbf{AIME24} & \textbf{AIME25} & \textbf{HMMT25} & \textbf{MATH500} & \textbf{Minerva} & \textbf{GSM8K} & \textbf{Avg.} & \textbf{AvgLen} \\
\midrule
EGRSD ($W{=}0$)      & \underline{76.67} & 70.00 & 45.42 & \underline{87.20} & 35.48 & \underline{94.61} & 68.23 & 11{,}836 \\
CL-EGRSD ($W{=}1$)   & 75.56 & \underline{71.67} & 46.67 & 86.95 & \underline{36.12} & \textbf{94.66} & \underline{68.60} & 11{,}962 \\
CL-EGRSD ($W{=}3$)   & 75.00 & 66.67 & \underline{50.00} & \textbf{87.50} & 34.74 & 94.41 & 68.05 & 12{,}005 \\
CL-EGRSD ($W{=}5$)   & \textbf{77.22} & 70.00 & \textbf{52.50} & 86.95 & 34.83 & 94.43 & \textbf{69.32} & 12{,}232 \\
CL-EGRSD ($W{=}7$)   & 74.72 & \textbf{74.16} & 41.25 & 87.15 & \textbf{36.31} & 94.51 & 68.02 & 12{,}231 \\
\bottomrule
\vspace{-0pt}

\end{tabular}
}
\end{table}

Table~\ref{tab:window_improved} reports the lookahead sweep on Qwen3-8B. $W=5$ gives the strongest Avg., outperforming both shorter and longer windows. A focused sweep under stronger suppression (Appendix~\ref{app:clgrid}) shows larger lookahead gains, motivating joint tuning of $\gamma$ and $W$.

\subsection{Teacher update schedule ablation}
\label{sec:teacher_update}

Our method uses a \emph{frozen} teacher throughout training: $p_T$ is the base model with LoRA adapter disabled (Appendix~\ref{app:background_full}) and is never updated. A natural question is whether allowing the teacher to track the student could help. We therefore ablate two families of teacher-update rules on Qwen3-8B with EGRSD at $\gamma=0.3$, keeping the rest of the training protocol identical to the main result. (i) \textbf{Exponential moving average (EMA)}: teacher weights are an EMA of the student, $p_T\leftarrow\alpha\,p_T+(1-\alpha)\,p_S$, with $\alpha=0.99$ (fast tracking, $\sim$100-step lag). (ii) \textbf{Hard copy}: every $K$ steps the teacher is replaced by a hard copy of the current student. We test $K\in\{20,50\}$.

As shown in Table~\ref{tab:teacher_update}, all three online-update schedules underperform the frozen teacher. %EMA ($\alpha{=}0.99$) loses $0.40$ Avg.; hard copy further loses $1.11$ at $K{=}20$ and $1.59$ at $K{=}50$, so the discrete replacements also underperform EMA at comparable update frequency, which we attribute to the discontinuity they introduce into $p_T$.
Mechanistically, when the teacher tracks the student, the log-ratio $\delta_{i,t}=\log p_T-\log p_S$ collapses towards zero, the magnitude $w_{i,t}=\mathrm{clip}(\exp(D_i\delta_{i,t}),1-\varepsilon,1+\varepsilon)$ collapses towards $1$, and EGRSD degrades towards a plain outcome-reward objective with an entropy gate. The benefit of conditioning the teacher on privileged context $(x,s^\star)$ is eroded. This connects to the epistemic-verbalization finding of \citet{kim2026epistemic}, who show that a teacher whose predictive distribution has been compressed (in their case by folding the privileged answer in too aggressively) produces worse supervision because epistemic uncertainty markers are suppressed. Updating the teacher towards the current student is a different but related pathway to the same failure mode: the teacher drifts away from its original calibrated distribution and loses the outside-view reference that makes this three-signal decomposition informative. A frozen $p_T$ avoids both over-confidence and this drift-based calibration loss. We therefore adopt it as the default.

\subsection{Mechanism analysis}
\label{sec:mechanism}

\paragraph{Entropy diagnoses unreliable evidence.}
\label{sec:entropy_diagnostic}

\begin{table}[t]
\centering
\small
\caption{Teacher-update ablation on Qwen3-8B with EGRSD ($\gamma=0.3$).}
\label{tab:teacher_update}
\resizebox{\textwidth}{!}{
\begin{tabular}{l c c c c c c c}
\toprule
\textbf{Teacher update} & \textbf{AIME24} & \textbf{AIME25} & \textbf{HMMT25} & \textbf{MATH500} & \textbf{Minerva} & \textbf{GSM8K} & \textbf{Avg.} \\
\midrule
\rowcolor{bestcolor}
Frozen (ours) & \textbf{76.67} & \textbf{70.00} & \textbf{45.42} & 87.20 & \textbf{35.48} & 94.61 & \textbf{68.23} \\
EMA ($\alpha{=}0.99$) & 73.89 & 70.83 & 45.00 & 87.25 & 35.29 & \textbf{94.73} & 67.83 \\
Hard copy ($K{=}20$) & 73.89 & 67.50 & 45.00 & \textbf{87.45} & 34.19 & 94.67 & 67.12 \\
Hard copy ($K{=}50$) & 74.44 & 67.50 & 40.83 & 87.30 & 35.20 & 94.56 & 66.64 \\
\bottomrule
\end{tabular}
}
\end{table}

\begin{figure}[t]
\centering
\begin{subfigure}{0.49\linewidth}
\centering
\includegraphics[width=\linewidth]{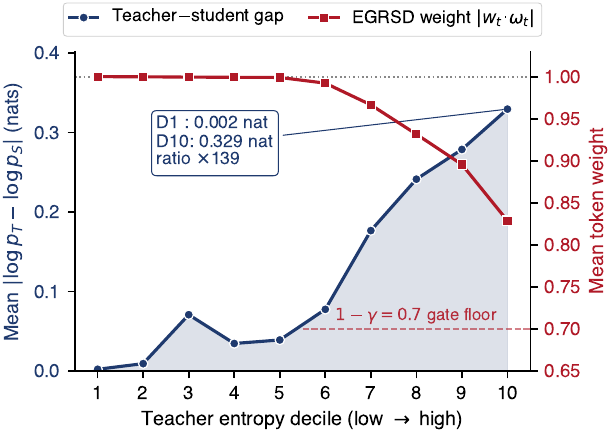}
\caption{Teacher--student evidence gap vs.\ EGRSD update weight per entropy decile.}
\label{fig:entropy_reliability}
\end{subfigure}\hfill
\begin{subfigure}{0.49\linewidth}
\centering
\includegraphics[width=\linewidth]{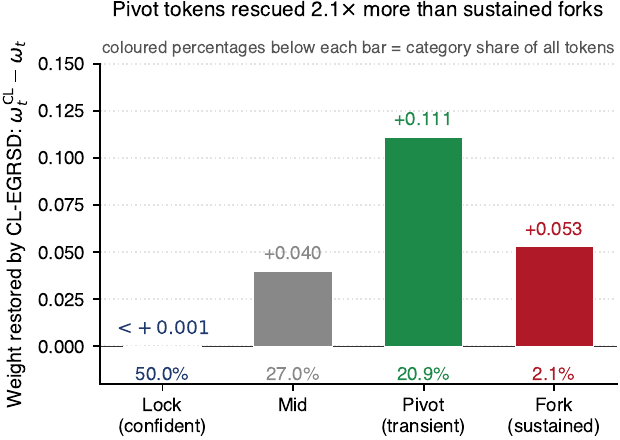}
\caption{Mean lookahead weight increment $\omega_{i,t}^{\mathrm{CL}}-\omega_{i,t}$ by regime.}
\label{fig:cl_pivot_diagnostic}
\end{subfigure}
\caption{Mechanism diagnostics on 5.5M held-out tokens (1{,}688 completions from AIME24/25, HMMT25, Minerva). \textbf{Left:} the mean teacher--student log-prob gap grows from $0.00237$ nats (decile 1) to $0.329$ nats (decile 10), a $\times 139$ spread, while EGRSD attenuates the mean update weight in high-entropy deciles and stays above the $1-\gamma=0.7$ gate floor. \textbf{Right:} CL-EGRSD restores pivot weight by $+0.111$ on average versus $+0.053$ for sustained forks (a $2.1\times$ selectivity), while leaving lock tokens near the weight ceiling ($\gamma=0.3,W=5$).}
\label{fig:mechanism_combined}
\end{figure}

Figure~\ref{fig:entropy_reliability} tests the central mechanism directly. High-entropy positions coincide with substantially less concentrated teacher--student evidence: the top entropy decile has roughly 140$\times$ the log-prob gap of the bottom decile. EGRSD therefore does not treat entropy as a length penalty. It uses entropy as a confidence signal to calibrate which token-level evidence should receive a strong learning signal. This empirical pattern matches the finding of \citet{kim2026epistemic} that high-entropy teacher tokens carry uncertainty that should be preserved rather than suppressed: the high-entropy decile is precisely where teacher evidence becomes least reliable, which is why the confidence gate retains a nonzero floor there rather than zeroing out the signal.

\paragraph{Lookahead targets transient pivots.}
\label{sec:pivot_diagnostic}

To quantify how CL-EGRSD reshapes token weight across the three regimes introduced in \S\ref{sec:motivation}, we use operational thresholds on normalized teacher entropy: a position $(i,t)$ is classified as \emph{lock} if $\widehat{H}_{i,t}\leq\tau_\mathrm{low}$, as \emph{fork} if $\widehat{H}_{i,t}\geq\tau_\mathrm{high}$ \emph{and} $\widehat{H}_{i,t}^{\mathrm{CL}}\geq\tau_\mathrm{high}$, and as \emph{pivot} if $\widehat{H}_{i,t}\geq\tau_\mathrm{high}$ but $\widehat{H}_{i,t}^{\mathrm{CL}}\leq\tau_\mathrm{low}$. All other positions are labeled \emph{mid}. EGRSD attenuates weight uniformly on all high-entropy positions through $\omega_{i,t}$. CL-EGRSD's causal-lookahead replacement instead selectively restores weight on pivots by swapping in a low-entropy future value.

Figure~\ref{fig:cl_pivot_diagnostic} explains why lookahead is useful but conditional. We categorize tokens by current entropy $\widehat{H}_{i,t}$ and five-token lookahead entropy $\widehat{H}_{i,t}^{\mathrm{CL}}$: \emph{lock} (low, low), \emph{fork} (high, still high), \emph{pivot} (high, resolves to low), and \emph{mid} (remainder). Pivots constitute $20.9\%$ of the analyzed tokens, and at $\gamma=0.3, W=5$ CL-EGRSD restores their weight by $+0.111$ on average versus $+0.053$ for sustained forks (a $2.1\times$ selectivity ratio). Lock restore is below $0.001$ because low-entropy positions are already near the weight ceiling. This matches the intended behavior: preserve transient transition points without undoing entropy suppression everywhere. On 4B, lookahead is less consistent, making EGRSD the cleaner default. On 8B, the model appears better able to exploit protected transitions.

\section{Conclusion}
\label{sec:conclusion}

Across Qwen3-4B and Qwen3-8B mathematical reasoning benchmarks with thinking mode enabled, an entropy-gated token-level update is the only intervention in our comparison that improves over the no-train baseline on the accuracy--length frontier. The gate has two forms: EGRSD attenuates high-entropy positions through the instantaneous teacher entropy, and CL-EGRSD swaps in the minimum entropy over a short causal future window to preserve transient pivot tokens. EGRSD Pareto-dominates OPSD on Qwen3-4B, and CL-EGRSD attains the highest Avg. on Qwen3-8B at a length comparable to OPSD.

\bibliographystyle{plainnat}
\bibliography{references}

\clearpage
\appendix
\startcontents[appendix]
\printcontents[appendix]{ }{0}{\section*{Appendix}}
% \newpage
% \section*{Appendix}
% Prefix appendix figures / tables / algorithms to avoid hyperref destination collision.
\renewcommand{\thefigure}{A\arabic{figure}}
\renewcommand{\thetable}{A\arabic{table}}
\renewcommand{\theHfigure}{appendix.\arabic{figure}}
\renewcommand{\theHtable}{appendix.\arabic{table}}
\setcounter{figure}{0}
\setcounter{table}{0}

\section{Extended related work}
\label{app:related}

This appendix complements the two paragraphs retained in the main Related Work section with additional context on (i) long-form reasoning efficiency, (ii) RLVR-style token-level credit assignment, and (iii) a method-by-method positioning summary.

\paragraph{Long-form reasoning and reasoning efficiency.}
Chain-of-thought prompting and recent reasoning-oriented models have shown that allocating more inference-time computation can substantially improve multi-step mathematical problem solving~\citep{wei2022chain,openai2024o1,deepseekr1,qwen2025qwen3}. This gain comes with a practical cost: reasoning models often emit long deliberations containing redundant checks, false starts, and repeated verification. A growing line of work therefore studies reasoning compression and efficient post-training, including reward-based length control, supervised fine-tuning on concise completions, training-free prompting, pruning, and latent or self-distillation-based compression. Chain of Draft reduces reasoning verbosity at inference time by prompting models to write concise intermediate drafts~\citep{xu2025chainofdraft}. CRISP~\citep{opsdc} is the closest compression-oriented baseline to our setting and uses iterative self-policy distillation to encourage concise reasoning. Our method is complementary in goal but different in mechanism: rather than imposing a prompt-based conciseness instruction or treating all teacher positions uniformly, EGRSD changes the token-level confidence weighting of the distillation signal by down-weighting high-entropy teacher positions.

\paragraph{RLVR and token-level credit assignment.}
Reinforcement learning with verifiable rewards has become a standard post-training approach for reasoning models because correctness can often be checked automatically~\citep{deepseekr1}. A widely used instantiation is Group Relative Policy Optimization (GRPO)~\citep{shao2024deepseekmath}, which estimates token-level advantages from sequence-level outcome rewards across a rollout group. Its main limitation is sparse credit assignment: a sequence-level reward or advantage is broadcast across all tokens in a long rollout, even though only a small subset of tokens may determine the outcome. Process-level supervision and process reward models address this by scoring intermediate reasoning steps, but they require additional annotation, modeling, or inference machinery~\citep{lightman2023let}. Concurrent work on direction-aware self-distillation (RLSD~\citep{yang2026rlsd}) casts self-distillation as a token-level credit-assignment tool: outcome rewards determine update direction, while the privileged teacher--student likelihood ratio modulates update magnitude. Related distillation-RL hybrids such as reinforcement-aware knowledge distillation also modify how teacher probabilities interact with policy optimization~\citep{zhang2026trrd}. EGRSD also adopts this direction--magnitude pair and adds a missing confidence signal: when the privileged teacher distribution is diffuse, the teacher--student ratio alone can overstate the confidence of the token-level evidence. Entropy gating reduces this effect without changing the reward-grounded update direction.

\paragraph{Positioning.}
Overall, EGRSD sits at the intersection of reasoning compression, on-policy self-distillation, and token-level credit assignment. Compared with OPSD~\citep{opsd}, it weights privileged teacher positions according to teacher confidence rather than uniformly. Against CRISP~\citep{opsdc}, the change is in token-level credit allocation instead of a conciseness instruction. Relative to RLSD~\citep{yang2026rlsd}, EGRSD shares the direction--magnitude decomposition and adds a teacher-entropy confidence signal. The closest contemporary point of contact is SSD~\citep{zhang2026ssd}, which implicitly reshapes token distributions at lock/fork positions. Our gate is an explicit multiplicative factor on the token-level RLSD update, and CL-EGRSD further introduces a pivot regime not covered by SSD's two-category view. Unlike process-supervision approaches~\citep{lightman2023let}, EGRSD obtains fine-grained confidence information without extra annotations.

\section{Background details}
\label{app:background_full}

This section elaborates the notation, stop-gradient rationale, and advantage-whitening procedure that the compressed main-text Background (\S\ref{sec:background}) omits for brevity.

\paragraph{Teacher and student as a shared PEFT instance.}
Teacher and student share the same backbone. The teacher's weights are not updated through LoRA and are conditioned on the privileged context $(x,s^\star)$, while the student's LoRA adapter is active and sees only $(x)$. The privileged context $s^\star$ is inserted via a two-turn chat template consisting of a reference reasoning segment, a transition prompt, and the student's on-policy rollout. Further template details are deferred to Appendix~\ref{app:setup}. Because $p_T$ and $p_S$ differ only in LoRA state and conditioning, the teacher--student log-ratio $\delta_{i,t}$ captures exactly the effect of the privileged context plus LoRA adaptation.

\paragraph{Stop-gradient rationale.}
When computing $\delta_{i,t}$ and the magnitude $w_{i,t}$ in Eq.~\ref{eq:rlsd_weight}, we apply \texttt{stop-gradient} to $p_S$. Without this, gradients would flow through $\log p_S$ inside $w_{i,t}$ and bypass the outcome reward: the network could reduce the loss by shrinking its own $\log p_S$, which is unrelated to whether the sampled rollout was correct. Gradients therefore flow only through the outer $\log p_\theta$ factor in Eq.~\ref{eq:rlsd_loss}. The whole token-level advantage $A_i\,w_{i,t}$ (and later $A_i\,w_{i,t}\,\omega_{i,t}$ for EGRSD) is treated as a constant multiplier during backpropagation.

\paragraph{Advantage whitening.}
Rewards $r_i$ are whitened using a running reward mean $\bar r_{<s}$ and standard deviation $\sigma_{r,<s}$ maintained across training steps via Welford's algorithm:
\begin{equation}
\label{eq:advantage}
    A_i = (r_i - \bar r_{<s})/\sigma_{r,<s}.
\end{equation}
During the first ten steps, before the running statistics become reliable, we use a constant warm-up baseline of $0.5$, i.e.\ $A_i = r_i - 0.5$. This matches the implementation used by all compared methods under our shared trainer and isolates method-level differences from advantage-estimation variance.

\paragraph{Relation to GRPO.}
GRPO~\citep{shao2024deepseekmath} also broadcasts a sequence-level advantage to every token, but estimates the advantage from group-relative outcomes within each mini-batch and applies it with a uniform per-token weight. RLSD can be viewed as GRPO with the uniform weight replaced by the teacher--student likelihood ratio $w_{i,t}$, and EGRSD as RLSD with an additional teacher-confidence factor $\omega_{i,t}$.

\section{Derivations for the main-text remarks}
\label{app:derivations}

This appendix supplies the short derivations underlying the two methodological remarks in \S\ref{sec:egrsd} and \S\ref{sec:clegrsd}.

\subsection{Geometric interpretation of the linear gate}
\label{app:linear_gate}

We view the token-level update as a one-dimensional shrinkage problem on a signal-plus-noise reference model. The model is interpretive and is used only to motivate the form of the gate, not to establish a tight bound on the true loss surface. Fix a token position $(i,t)$ and write the raw update magnitude as $y=\mu+\epsilon$, where $\mu$ is a latent "useful" component and $\epsilon$ is a zero-mean noise term with variance $\sigma^2$. A multiplicative gate $\hat\mu=\omega y$ minimizes the mean-squared error $\mathbb{E}[(\hat\mu-\mu)^2]$ at
\begin{equation}
    \omega^\star \;=\; \frac{\mu^2}{\mu^2+\sigma^2} \;=\; \frac{1}{1+\sigma^2/\mu^2}.
    \label{eq:mse_shrinkage}
\end{equation}
To link teacher entropy to the noise-to-signal ratio, we adopt the simplest monotone proxy $\sigma^2/\mu^2 \le a_0\,\widehat{H}_{i,t}$ for some constant $a_0>0$. Since $\omega^\star$ is strictly decreasing in $\sigma^2/\mu^2$, the proxy's saturating case $\sigma^2/\mu^2 = a_0\widehat{H}$ yields the \emph{worst-case shrinkage bound}
\begin{equation}
    \omega^\star_{a_0}(\widehat{H}) \;=\; \frac{1}{1+a_0\widehat{H}}, \qquad \widehat{H}\in[0,1],
    \label{eq:reference_curve}
\end{equation}
which lower-bounds the true MSE-optimal shrinkage: $\omega^\star(\widehat{H}) \ge \omega^\star_{a_0}(\widehat{H})$. We use $\omega^\star_{a_0}$ as the reference curve because it is the most aggressive shrinkage compatible with the proxy. Matching it at the endpoints is therefore a conservative design.

\paragraph{Endpoint chord.}
The linear gate $\omega(\widehat{H})=1-\gamma\widehat{H}$ is the secant of $\omega^\star_{a_0}$ at the two boundary points $\widehat{H}\in\{0,1\}$. Matching endpoints,
\begin{equation}
    \omega(0)=\omega^\star_{a_0}(0)=1
    \quad\text{holds automatically,}\qquad
    \omega(1)=\omega^\star_{a_0}(1)\;\;\Longleftrightarrow\;\;
    1-\gamma=\frac{1}{1+a_0},
\end{equation}
which solves to
\begin{equation}
    \gamma \;=\; \frac{a_0}{1+a_0} \;=\; \frac{\mathrm{NSR}_{\max}}{1+\mathrm{NSR}_{\max}},
    \label{eq:gamma_nsr}
\end{equation}
where $\mathrm{NSR}_{\max}:=a_0$ denotes the worst-case noise-to-signal ratio at $\widehat{H}=1$ under the proxy. Because $\omega^\star_{a_0}$ is strictly convex for $a_0>0$, the linear gate lies on or above the reference curve throughout $[0,1]$, with equality only at the two endpoints. We do not claim linearity is MSE-optimal. We claim it is the simplest affine form that matches the worst-case shrinkage bound at the extreme points of the normalized-entropy range, and that the resulting $\gamma$ admits the direct noise-to-signal reading in~Eq.~\eqref{eq:gamma_nsr}. The linear NSR proxy $\sigma^2/\mu^2\le a_0\widehat{H}$ is a heuristic: we make no quantitative commitment to the numerical value of $a_0$.

\subsection{Minimum as the extremal causal smoothing filter}
\label{app:min_pool}

\begin{definition}[Causal smoothing filter family]
\label{def:filter_family}
For a window $W\ge 1$, let $\mathcal{F}_W$ denote the class of functions $\phi:\mathbb{R}^{W+1}_{\ge 0}\to\mathbb{R}_{\ge 0}$ satisfying, for every input $(h_0,\ldots,h_W)$ and every $c\ge 0$: \textbf{(a)} \emph{per-argument monotonicity}, with $\phi$ non-decreasing in each coordinate separately; \textbf{(b)} \emph{conservativity}, with $\phi(h_0,\ldots,h_W)\le h_0$; \textbf{(c)} \emph{idempotency}, with $\phi(c,\ldots,c)=c$; and \textbf{(d)} \emph{causality}, where $\phi$ depends only on the current entry $h_0$ and the $W$ future entries $h_1,\ldots,h_W$.
\end{definition}

Condition (b) is what we require of the family: a lookahead replacement for the gate should never inflate the uncertainty attributed to a currently low-entropy (lock) position, since doing so would attenuate reliable supervision. Standard windowed averages violate (b) whenever a low-entropy token precedes a high-entropy span, and are therefore excluded.

\begin{lemma}[Pointwise lower bound]
\label{lem:min_lowerbound}
Every $\phi\in\mathcal{F}_W$ satisfies $\phi(h_0,\ldots,h_W)\ge \min_{0\le j\le W} h_j$.
\end{lemma}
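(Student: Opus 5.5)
Let $m:=\min_{0\le j\le W} h_j$. The argument has two steps: lower the input coordinatewise to the constant vector $(m,\ldots,m)$, then evaluate $\phi$ there.

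First, since $m\le h_j$ for every $j\in\{0,\ldots,W\}$, the constant vector $(m,\ldots,m)$ is dominated coordinatewise by $(h_0,\ldots,h_W)$. Also $m\ge 0$, because all inputs lie in $\mathbb{R}_{\ge 0}$, so $(m,\ldots,m)$ is in the domain of $\phi$.

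Second, we pass from $(m,\ldots,m)$ to $(h_0,\ldots,h_W)$ one coordinate at a time. Define the intermediate vectors
\[
v^{(k)}=(h_0,\ldots,h_{k-1},m,\ldots,m),\qquad k=0,\ldots,W+1,
\]
so that $v^{(0)}=(m,\ldots,m)$ and $v^{(W+1)}=(h_0,\ldots,h_W)$. Consecutive vectors $v^{(k)}$ and $v^{(k+1)}$ differ only in coordinate $k$, where the value increases from $m$ to $h_k\ge m$. Since (a) gives only separate, per-coordinate monotonicity, this one-coordinate-at-a-time chain is needed. Applying (a) at each step yields
\[
\phi(v^{(0)})\le\phi(v^{(1)})\le\cdots\le\phi(v^{(W+1)}).
\]
The chain gives $\phi(h_0,\ldots,h_W)\ge\phi(m,\ldots,m)$. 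By idempotency (c) with $c=m\ge 0$, $\phi(m,\ldots,m)=m$. Combining the two, $\phi(h_0,\ldots,h_W)\ge \min_{0\le j\le W} h_j$, which is the claim.

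Neither conservativity (b) nor causality (d) is used. They only ensure that the minimum itself belongs to $\mathcal{F}_W$, which makes it the extremal member. Showing that membership would be a separate one-line check: the minimum is monotone in each coordinate, satisfies $\min_j h_j\le h_0$, and returns $c$ on constant inputs.
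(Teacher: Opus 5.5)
Your proof is correct and takes the paper's approach: set $m=\min_j h_j$, use per-argument monotonicity (a) to get $\phi(h_0,\ldots,h_W)\ge\phi(m,\ldots,m)$, and conclude with idempotency (c). Your one-coordinate-at-a-time chain spells out the step that the paper compresses into a single appeal to (a).
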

\begin{proof}
Let $m:=\min_j h_j$. Since $h_j\ge m$ for every $j$, per-argument monotonicity (a) gives $\phi(h_0,\ldots,h_W)\ge\phi(m,\ldots,m)$. Idempotency (c) gives $\phi(m,\ldots,m)=m$. Combining the two yields $\phi(h_0,\ldots,h_W)\ge m$.
\end{proof}

\begin{proposition}[Extremal weight recovery at pivots]
\label{prop:min_extremal}
Instantiate CL-EGRSD with any $\phi\in\mathcal{F}_W$ by replacing $\widehat{H}_{i,t}$ in Eq.~\eqref{eq:entropy_weight_improved} by $\phi(H_{i,t},\ldots,H_{i,t+W})/H_{\max}^{\mathrm{batch}}$, and let the weight increment under the replacement be
\[
    \Delta^\phi_{i,t} \;:=\; \gamma\cdot\frac{h_0-\phi(h_0,\ldots,h_W)}{H_{\max}^{\mathrm{batch}}},
    \qquad h_j:=H_{i,t+j}.
\]
At any position where the gate is not saturated by the lower clip in Eq.~\eqref{eq:entropy_weight_improved},
\[
    \Delta^\phi_{i,t} \;\le\; \Delta^{\min}_{i,t}
    \qquad\text{for every } \phi\in\mathcal{F}_W.
\]
\end{proposition}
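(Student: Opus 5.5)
The argument is a direct reduction to Lemma~\ref{lem:min_lowerbound}. Fix a position $(i,t)$ and write $h_j:=H_{i,t+j}$ for $j=0,\ldots,W$, with $H_{\max}^{\mathrm{batch}}>0$ the shared batch denominator. In the implementation this denominator is lower-bounded by $1$ nat, so it is strictly positive and dividing by it preserves inequalities. The minimum filter $\phi_{\min}(h_0,\ldots,h_W):=\min_j h_j$ is itself a member of $\mathcal{F}_W$:
\begin{itemize}
\item it is non-decreasing in each coordinate;
\item it satisfies $\min_j h_j\le h_0$;
\item it is idempotent;
\item it uses only $h_0,\ldots,h_W$.
\end{itemize}
Hence $\Delta^{\min}_{i,t}$ is a legitimate member of the family being compared, and the claim is an extremality statement within $\mathcal{F}_W$.

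\textbf{Step 1 (compare the two increments).} For any $\phi\in\mathcal{F}_W$, Lemma~\ref{lem:min_lowerbound} gives $\phi(h_0,\ldots,h_W)\ge\min_j h_j$. Subtracting from $h_0$ gives $h_0-\phi(h_0,\ldots,h_W)\le h_0-\min_j h_j$. Multiplying by $\gamma/H_{\max}^{\mathrm{batch}}\ge 0$ then gives $\Delta^\phi_{i,t}\le\Delta^{\min}_{i,t}$ directly from the definition of the increment.

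\textbf{Step 2 (relate increments to actual gate values).} The statement defines $\Delta^\phi_{i,t}$ as the unclipped difference $\gamma(h_0-\phi)/H_{\max}^{\mathrm{batch}}$, which equals $\omega^\phi_{i,t}-\omega_{i,t}$ exactly when neither clip in Eq.~\eqref{eq:entropy_weight_improved} is active. The upper clip at $1$ is never active for either weight. Since $\widehat H\ge 0$, both $1-\gamma\widehat H_{i,t}$ and $1-\gamma\phi/H_{\max}^{\mathrm{batch}}$ are at most $1$. Conservativity (b) guarantees $\phi\le h_0$, so the replaced argument is no larger than the original one. Therefore, if the original gate is above the lower clip $0.1$, the replaced gate is too, and both weights are affine in their arguments.

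Under the proposition's hypothesis, the unsaturated condition is exactly what is needed to read $\Delta^\phi$ as the true weight increment. The inequality from Step~1 then transfers to the actual recovered weight. I would also remark that outside the hypothesis the inequality still holds for clipped weights, since clipping is monotone, but this is not required.

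The only potential obstacle is bookkeeping around the clip and the sign of the denominator, not the core inequality, which is immediate from the lemma. I would state explicitly that $H_{\max}^{\mathrm{batch}}>0$ and $\gamma\ge 0$, and that (b) keeps the replaced argument within the unclipped region.
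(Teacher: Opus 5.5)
Your proposal is correct and matches the paper's proof. Step~1 is exactly the paper's argument: Lemma~\ref{lem:min_lowerbound} gives $\phi\ge\min_j h_j$, you subtract from $h_0$, and you scale by $\gamma/H_{\max}^{\mathrm{batch}}\ge 0$. Step~2 is also sound, both in using conservativity to keep the replaced gate unsaturated and in appealing to monotonicity of clipping, but it goes beyond what the paper needs: the paper defines $\Delta^\phi_{i,t}$ without clipping and never uses the non-saturation hypothesis in its proof.
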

\begin{proof}
By Lemma~\ref{lem:min_lowerbound}, $\phi(h_0,\ldots,h_W)\ge\min_j h_j$, hence $h_0-\phi\le h_0-\min_j h_j$. Multiplying by $\gamma/H_{\max}^{\mathrm{batch}}\ge 0$ preserves the inequality.
\end{proof}

Pivot positions are by definition those with large $h_0-\min_j h_j$ (high current entropy, low lookahead entropy), so Proposition~\ref{prop:min_extremal} states that the minimum is the member of $\mathcal{F}_W$ that maximizes weight recovery there. Conservativity (b) simultaneously guarantees that at sustained high-entropy (fork) positions, where $\min_j h_j\approx h_0$, every member of the family (including the minimum) delivers essentially no recovery. The inequality $\Delta^\phi\le\Delta^{\min}$ is strict when $h_0>\min_j h_j$ (the non-degenerate pivot case). On constant windows $h_0=\cdots=h_W$, every $\phi\in\mathcal{F}_W$ collapses to the common value by idempotency and the weight recovery is zero across the family. This is the selectivity property CL-EGRSD exploits.

\section{Full experimental details}
\label{app:setup}

\paragraph{Baselines.}
We select comparison methods that are closely related to our approach and have publicly available training code compatible with our shared protocol: the no-train base model, supervised fine-tuning (SFT), Group Relative Policy Optimization (GRPO)~\citep{shao2024deepseekmath}, on-policy self-distillation (OPSD)~\citep{opsd}, and CRISP~\citep{opsdc}. While Reinforcement Learning with Self-Distillation (RLSD)~\citep{yang2026rlsd} is conceptually highly relevant, it lacks public training code at the time of writing. We therefore use our direction-aware baseline ($\gamma{=}0$) as its reference point in the ablations.

\paragraph{Training data.}
Our training data configuration matches that of OPSD~\citep{opsd}: the subset of OpenThoughts-114k~\citep{openthoughts114k} distilled from DeepSeek-R1~\citep{deepseekr1} reasoning traces and filtered to answer-verified samples~\citep{opsd_data}. Each sample provides a problem $x$ and a concise reference solution $s^\star$. Our data collator uses only the \texttt{problem} and \texttt{solution} columns, so the teacher context is $(x, s^\star, y_{<t})$ and the student context is $(x, y_{<t})$. All compared methods share the same training data and preprocessing, so accuracy differences isolate the effect of the loss function.

\paragraph{Evaluation benchmarks.}
\textbf{AIME 2024} / \textbf{AIME 2025} are the $30$-problem annual American Invitational Mathematics Examination contests. \textbf{HMMT 2025} is the $30$-problem February 2025 Harvard-MIT Mathematics Tournament. \textbf{MATH-500}~\citep{lightman2023let} is a held-out $500$-problem subset of the MATH competition dataset~\citep{hendrycks2021math}. \textbf{Minerva Math}~\citep{lewkowycz2022minerva} is a $272$-problem college-level STEM reasoning benchmark. \textbf{GSM8K}~\citep{cobbe2021gsm8k} is a $1{,}319$-problem grade-school arithmetic benchmark. Decoding and aggregation are described in \S\ref{sec:setup}.

\paragraph{Implementation details.}
All training and on-policy rollouts are conducted on a single $8\times$ NVIDIA A100-SXM4-80GB server ($640$\,GiB aggregate HBM, full-mesh NVLink). Total wall-clock time across all reported runs is approximately $24$ hours on this machine.
For the software environment, training and evaluation are implemented in PyTorch $2.10$ (CUDA $12.8$, NVIDIA driver $580$) using Transformers $4.57$ and PEFT~\citep{peft} $0.18$. We employ Hugging Face Accelerate $1.12$ with the \texttt{MULTI\_GPU} (DDP) backend. FlashAttention-$2$~\citep{dao2024flashattention2} (version $2.8.3$) is utilized to accelerate long-context forward passes. On-policy sampling and evaluation decoding are supported by vLLM $0.18$ in colocate mode. The detailed training hyperparameters are summarized in Table~\ref{tab:train_full}.

\begin{table}[ht]
\centering
\small
\caption{Training configuration shared by all methods on both model sizes. Only the loss and the loss-specific switches ($\gamma$, $W$, entropy mode) differ across methods.}
\begin{tabular}{lc}
\toprule
\textbf{Hyperparameter} & \textbf{Value} \\
\midrule
Optimizer & AdamW ($\beta_1{=}0.9, \beta_2{=}0.999$) \\
Learning rate & $5\times 10^{-6}$ \\
Max grad norm & $0.1$ \\
Per-device batch size & $4$ \\
Grad-accum steps & $1$ \\
Effective batch size & $32$ \\
Total training steps & $100$ \\
Checkpoint interval & every $25$ steps ($25/50/75/100$) \\
Precision & BF16 mixed precision \\
Sequence-window cap \texttt{max\_length} & $20{,}000$ tokens \\
\midrule
On-policy sampling (vLLM) & $T{=}1.1$, top-$p$ $0.95$, top-$k$ $20$ \\
Max per-step completion length & $1{,}024$ tokens \\
Rollouts per prompt \texttt{num\_rollouts} & $1$ \\
Magnitude clip $\varepsilon$ & $0.2$ \\
Length-shaping coefficient $\beta_L$ & $0.5$ (applied only on correct completions) \\
Length-shaping form & linear, $1-|y_i|/L_\mathrm{max}$ \\
Wrong-answer penalty & $0$ (incorrect $\Rightarrow r_i{=}0$) \\
Fixed (frozen) teacher & true \\
% Reason-first teacher prompt & true (two-turn chat template with $s^\star$) \\
\bottomrule
\end{tabular}
\label{tab:train_full}
\end{table}

\paragraph{Reward details.}
We use a verifier-based trajectory-level reward with optional length shaping:
\begin{equation*}
    r_i = \mathbb{1}[y_i\,\text{correct}]\cdot\big(1+\beta_L\cdot(1-|y_i|/L_\mathrm{max})\big),
\end{equation*}
where $\beta_L=0.5$ is the length-shaping coefficient and $L_\mathrm{max}$ is the maximum completion length. Incorrect and unverifiable completions receive $r_i=0$, and we do not use negative rewards. Advantage whitening and the teacher/student implementation are described in Appendix~\ref{app:background_full}.

\paragraph{Choice of $\gamma$ in the main tables.}
We adopt $\gamma=0.3$ for EGRSD at both model sizes and for CL-EGRSD on Qwen3-4B, so that the confidence-gate coefficient is matched between EGRSD and CL-EGRSD on the smaller model. On Qwen3-8B, CL-EGRSD instead uses $(\gamma, W) = (1.0, 5)$, drawn from the joint sweep in Table~\ref{tab:cl_grid_full} and discussed further in Appendix~\ref{app:recipe}. The full hyperparameter recipe is summarized in Table~\ref{tab:recipe}.

\section{Hyperparameters}
\label{app:recipe}

Table~\ref{tab:recipe} lists the per-method hyperparameters used for EGRSD and CL-EGRSD in the main tables (Tables~\ref{tab:main_4b_improved} and~\ref{tab:main_8b_improved}). On Qwen3-4B we use $\gamma=0.3$ for both methods and $W=3$ for CL-EGRSD, drawn from the $\gamma$ sweep in Table~\ref{tab:gamma_improved} and the matched-$\gamma$ lookahead sweep in Table~\ref{tab:window_improved}. On Qwen3-8B we use $\gamma=0.3$ for EGRSD and $(\gamma, W) = (1.0, 5)$ for CL-EGRSD. The latter is drawn from the joint sweep in Table~\ref{tab:cl_grid_full}, attains the highest Avg. in our six-benchmark evaluation, and isolates the contribution of the causal-lookahead window from entropy-coefficient strength through comparison with the matched $W{=}0$ reference. Stronger-suppression configurations for Qwen3-4B were not evaluated. Extending the joint $\gamma$--$W$ sweep to 4B is left as future work.

\begin{table}[h]
\centering
\small
\caption{Per-method hyperparameters for the main-table EGRSD and CL-EGRSD results on Qwen3-4B and Qwen3-8B.}
\label{tab:recipe}
\begin{tabular}{l l c c}
\toprule
\textbf{Model} & \textbf{Method} & \textbf{$\gamma$} & \textbf{$W$} \\
\midrule
Qwen3-4B & EGRSD      & $0.3$ & --- \\
Qwen3-4B & CL-EGRSD   & $0.3$ & $3$ \\
Qwen3-8B & EGRSD      & $0.3$ & --- \\
Qwen3-8B & CL-EGRSD   & $1.0$ & $5$ \\
\bottomrule
\end{tabular}
\end{table}

\section{Training dynamics}
\label{app:dynamics}

Figure~\ref{fig:gradnorm_8b} reports the per-step pre-clip gradient norm on Qwen3-8B for four representative dense-logging reruns over the $100$-step budget: the baselines OPSD and CRISP, and our EGRSD and CL-EGRSD. We omit the corresponding loss curves because the shared trainer logs a signed advantage-weighted surrogate whose magnitude is not directly comparable across methods, so we do not make cross-method loss comparisons.

\begin{figure}[h]
\centering
\includegraphics[width=0.85\linewidth]{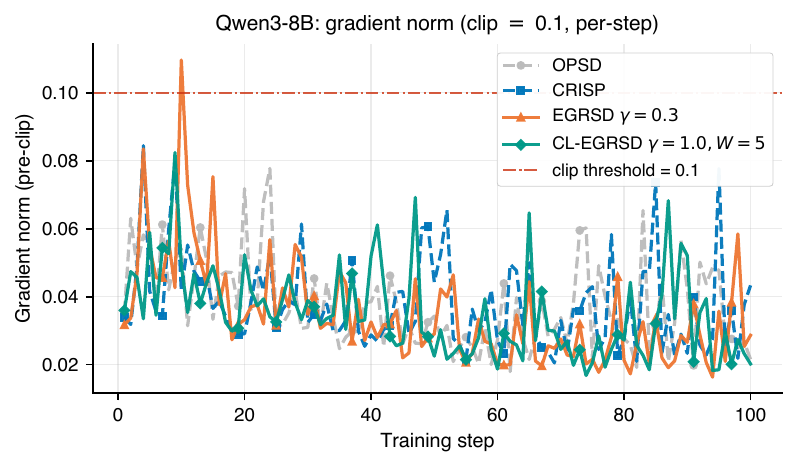}
\caption{Per-step pre-clip gradient norm on Qwen3-8B. The red dash-dot line is the clipping threshold ($0.1$). Clipping is rarely active: only one EGRSD step slightly exceeds the threshold, while the remaining runs stay below it.}
\label{fig:gradnorm_8b}
\end{figure}

\textbf{Gradient norm.} The dense traces show that all four methods operate in a narrow pre-clip range for almost the entire run. OPSD and CRISP stay mostly in the $0.02$--$0.08$ band, while EGRSD and CL-EGRSD have similar average magnitudes (mean pre-clip norm $\approx 0.035$). The only visible clipping event is a single EGRSD spike early in training, which reaches $\approx 0.11$. CL-EGRSD remains below the $0.1$ threshold throughout. Thus, the entropy-weighted objectives do not systematically inflate update magnitudes, and gradient clipping acts as an occasional safeguard rather than a persistent constraint.

% \section{Token Efficiency (Qwen3-8B)}
% \label{app:efficiency_8b}

% Figure~\ref{fig:token_efficiency_8b} visualizes the accuracy--length trade-off on Qwen3-8B. Unlike the 4B setting (Table~\ref{tab:efficiency_full}), where EGRSD and CL-EGRSD simultaneously improve accuracy and compress generations relative to the no-train baseline, on 8B our methods trade slightly more tokens for meaningfully higher accuracy. The Pareto frontier of the accuracy--length trade-off on 8B is spanned by the no-train baseline (shorter but lower accuracy), EGRSD (ours), and CL-EGRSD (ours) at the high-accuracy end. All trainable baselines (SFT, GRPO, OPSD, CRISP) are Pareto-dominated.

% \begin{figure}[h]
% \centering
% \includegraphics[width=0.85\linewidth]{figures/fig_token_efficiency_8b.pdf}
% \caption{Token efficiency on Qwen3-8B. Points show each method's average generation length versus Avg. accuracy across six benchmarks. The green dashed line traces the Pareto frontier (minimize length, maximize accuracy), which is spanned by the no-train baseline, EGRSD (ours), and CL-EGRSD (ours); all trainable baselines (SFT, GRPO, OPSD, CRISP) are dominated.}
% \label{fig:token_efficiency_8b}
% \end{figure}

\section{Extended CL-EGRSD ablation}
\label{app:clgrid}

To isolate the interaction between lookahead window $W$ and entropy coefficient $\gamma$, we ran an additional CL-EGRSD sweep on Qwen3-8B. We evaluate selected configurations on AIME24, AIME25, HMMT25, and Minerva with $K=4$. MATH500 and GSM8K are excluded because they saturate across the method family. Numerical results are reported in Table~\ref{tab:cl_grid_full}, and Figure~\ref{fig:gamma_w_multibench_full} visualizes the full $\gamma\times W$ grid as per-benchmark heatmaps.

\begin{table}[h]
\centering
\small
\caption{CL-EGRSD extended ablation on Qwen3-8B, suppress mode.}
\label{tab:cl_grid_full}
\begin{tabular}{l c c c c}
\toprule
\textbf{Config} ($\gamma,W$) & \textbf{AIME24} & \textbf{AIME25} & \textbf{HMMT25} & \textbf{Minerva} \\
\midrule
\multicolumn{5}{l}{\textbf{8B, suppress mode ($\gamma=1.0$ window sweep)}} \\
EGRSD $(1.0,0)$ ref  & 76.39 & 66.67 & 47.50 & 34.56 \\
CL-EGRSD $(1.0,3)$   & 74.17 & 65.83 & 37.50 & 34.56 \\
CL-EGRSD $(1.0,7)$   & 74.44 & 70.83 & 43.33 & \textbf{35.39} \\
\rowcolor{bestcolor}
CL-EGRSD $(1.0,5)$   & \textbf{77.22} & \textbf{70.00} & \textbf{52.50} & 34.83 \\
\midrule
\multicolumn{5}{l}{\textbf{8B, suppress mode ($W=5$ gamma sweep)}} \\
EGRSD $(0.5,0)$ ref  & 75.00 & 67.50 & 43.33 & 34.56 \\
CL-EGRSD $(0.3,5)$   & 74.72 & 69.17 & 45.00 & \textbf{35.11} \\
CL-EGRSD $(0.5,5)$   & 76.11 & 65.00 & 47.50 & 34.38 \\
\rowcolor{bestcolor}
CL-EGRSD $(1.0,5)$   & \textbf{77.22} & \textbf{70.00} & \textbf{52.50} & 34.83 \\
\bottomrule
\end{tabular}
\end{table}

\begin{figure}[t]
\centering
\includegraphics[width=\linewidth]{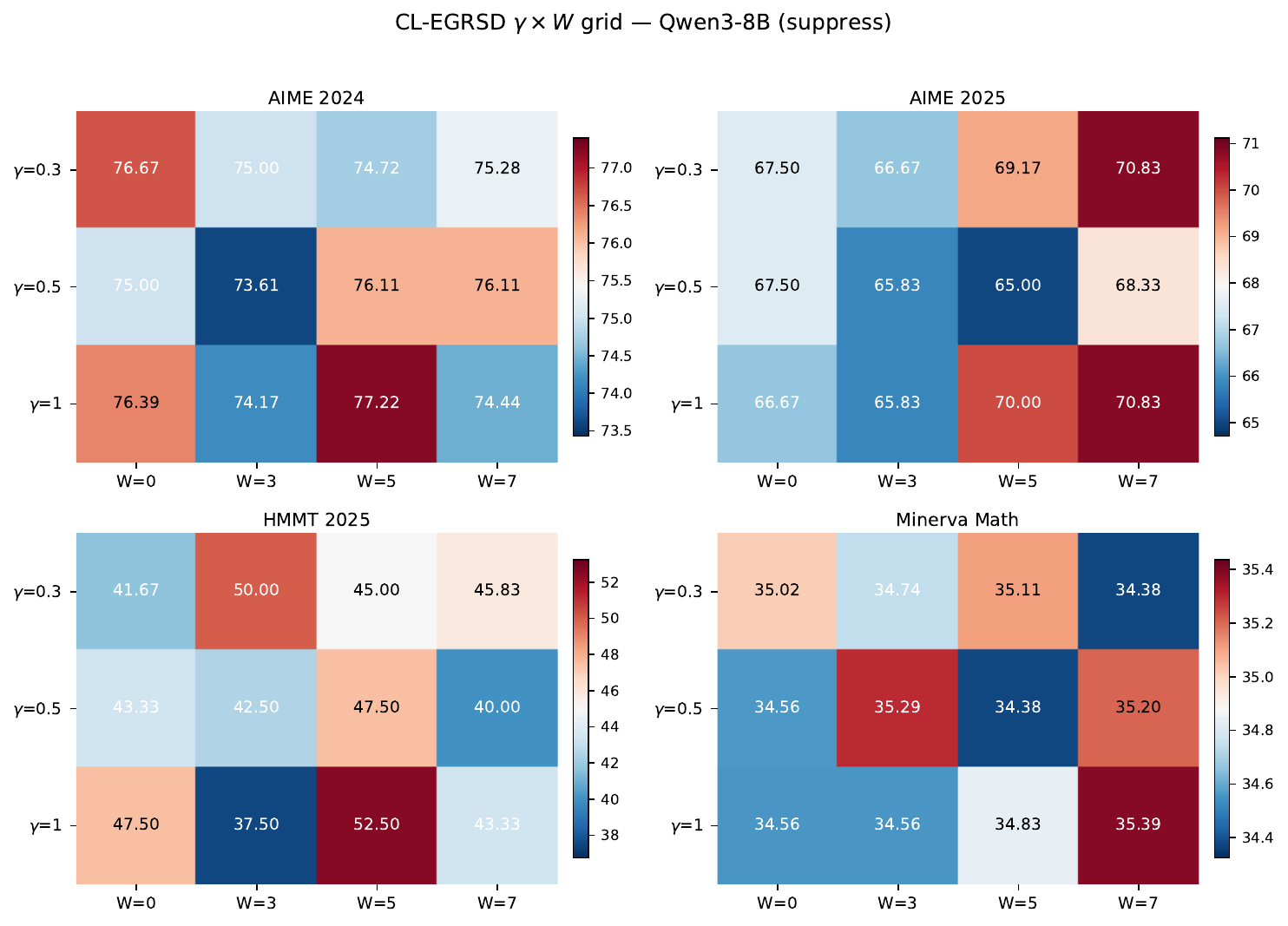}
\caption{Extended CL-EGRSD $\gamma\times W$ grid on Qwen3-8B, suppress mode. All numbers are avg@4 (\%). The $W=0$ column is the matched EGRSD reference. Other columns are CL-EGRSD. Per-panel color scales emphasize within-benchmark differences.}
\label{fig:gamma_w_multibench_full}
\end{figure}

At strong suppression, $(\gamma,W)=(1.0,5)$ improves over the matched $W=0$ reference on all four evaluated benchmarks, with the largest gains on HMMT25 ($47.50\to52.50$) and AIME25 ($66.67\to70.00$). This focused sweep motivates jointly tuning $\gamma$ and $W$, but it uses a restricted benchmark subset, so the main headline remains the six-benchmark result.

\section{Qualitative pivot examples}
\label{app:qualitative_pivots}

This appendix provides three qualitative views of the pivot/fork distinction on held-out reasoning windows. Figure~\ref{fig:token_entropy_global} gives a global view of the lock/fork/pivot token regimes demonstrated in the main-text Figure~\ref{fig:token_entropy_motivation}. Figure~\ref{fig:token_entropy_case_full} visualizes pivot rescue directly, showing per-token current entropy, five-token lookahead entropy, and the corresponding EGRSD and CL-EGRSD weights on representative reasoning windows. Figure~\ref{fig:token_entropy_annotated} further overlays the top-$K$ local entropy peaks on two Minerva completions together with their 4-token left context, verifying that most annotated peaks are transient (i.e., pivots whose lookahead entropy drops rapidly) rather than sustained forks.

\begin{figure}[!h]
\centering
\includegraphics[width=\linewidth]{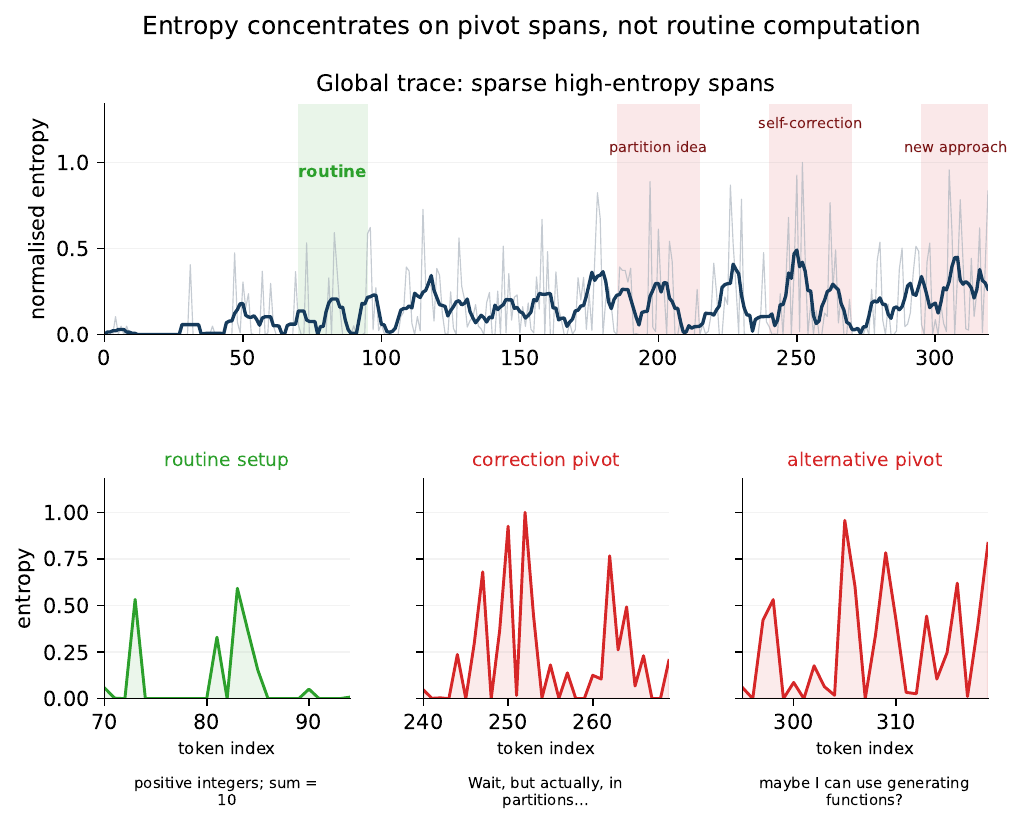}
\caption{Global entropy trace on a held-out reasoning problem (Qwen3-4B), illustrating at full-trace scale the same lock/fork/pivot token regimes demonstrated in the main-text Figure~\ref{fig:token_entropy_motivation}. \textbf{Top}: per-token normalized entropy across the $\sim$320-token reasoning trace, with a smoothed envelope. The green span marks a routine-computation (lock) region and the three red spans mark strategy-pivot transitions. \textbf{Bottom}: zoomed views of one routine span and two pivot spans, each with a short text snippet from the generated trace.}
\label{fig:token_entropy_global}
\end{figure}

\begin{figure}[h]
\centering
\includegraphics[width=\linewidth]{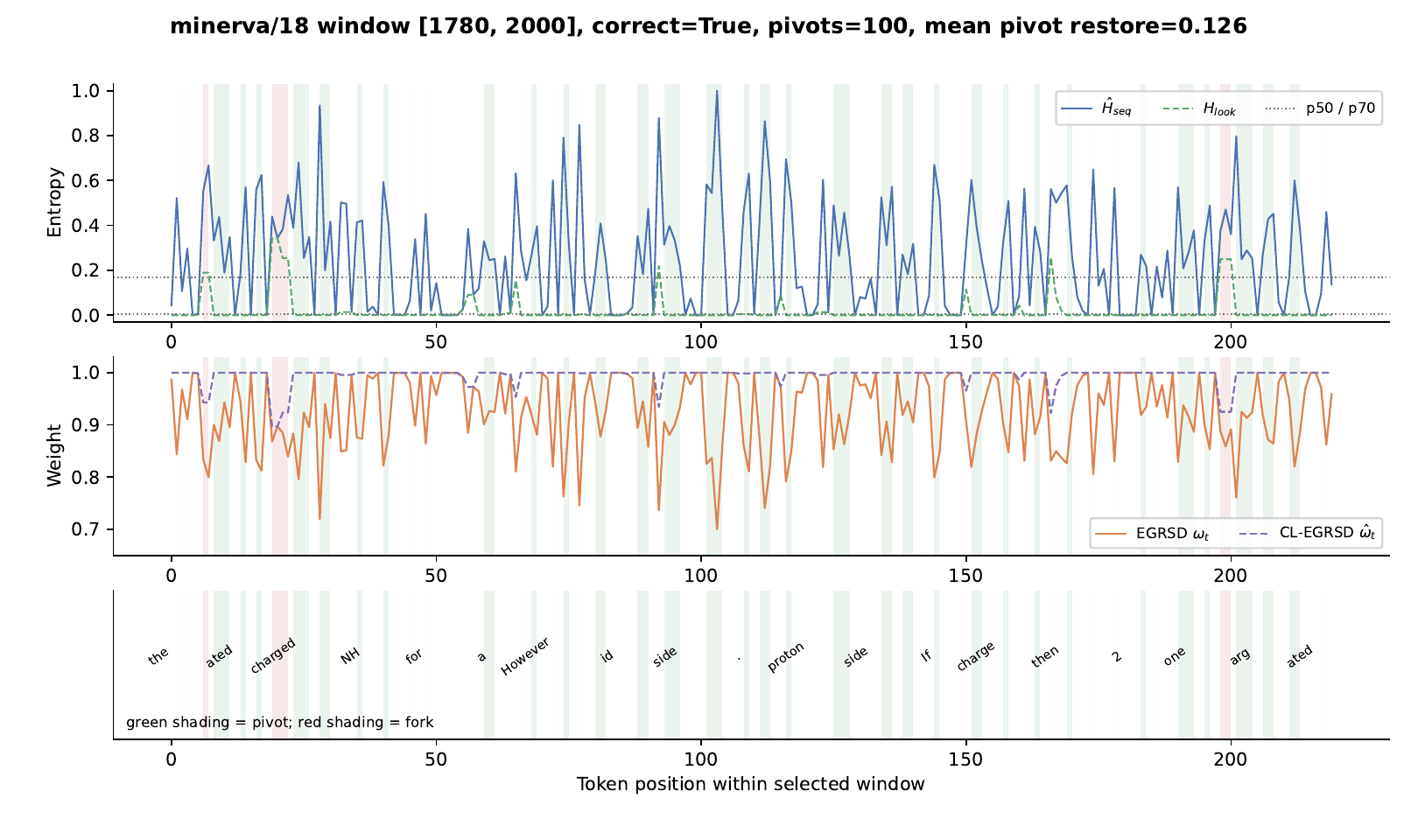}
\caption{Qualitative token-level examples of pivot rescue. Each page shows current entropy, five-token lookahead entropy, EGRSD weight, and CL-EGRSD weight for a selected reasoning window. Green spans mark pivot tokens where high current entropy resolves into low future entropy and CL-EGRSD restores the token weight. Red spans mark sustained forks. Token snippets are sparse anchors rather than full transcripts.}
\label{fig:token_entropy_case_full}
\end{figure}

\begin{figure}[!h]
\centering
\includegraphics[width=\linewidth]{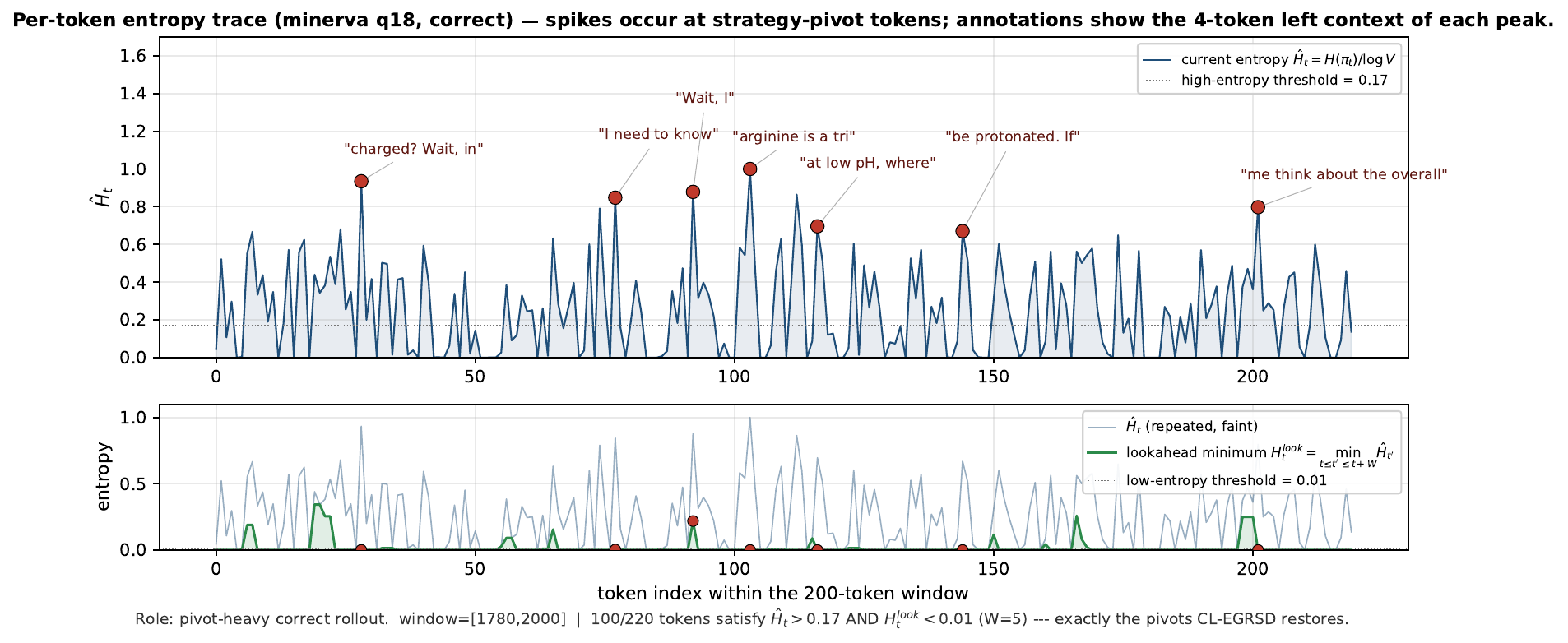}
\caption{Per-token entropy trace with annotated transition points on two Minerva reasoning windows (Qwen3-4B). \textbf{Top row}: current entropy $\widehat{H}_{i,t}$ with the top-$K$ local peaks highlighted and annotated with the 4-token left context (red dots). The annotated peaks align with discourse transition markers and local shifts in the generated derivation. \textbf{Bottom row}: the five-token causal lookahead $\widehat{H}_{i,t}^{\mathrm{CL}}$ with the same peak indices overlaid. At $W=5$, $\widehat{H}_{i,t}^{\mathrm{CL}}\leq 0.08$ for $13$ of the $14$ annotated peaks, matching the transient-transition signature targeted by CL-EGRSD.}
\label{fig:token_entropy_annotated}
\end{figure}

\section{AceReason-Nemotron-7B cross-architecture diagnostic}
\label{app:acereason_diag}

We additionally evaluate the methods on \textbf{AceReason-Nemotron-7B} ~\citep{chen2025acereason}, a strong reasoning-tuned external base, with results detailed in Table~\ref{tab:acereason_diag_full}. This diagnostic is intentionally separate from the Qwen3 main comparison: AceReason has a different post-training recipe and a longer native reasoning style, and the no-training model is already competitive. 

\begin{table}[!h]
\centering
\small
\caption{AceReason-Nemotron-7B cross-architecture diagnostic. All accuracy metrics are percentages. Best and runner-up values are shown in \textbf{bold} and \underline{underline}.}
\label{tab:acereason_diag_full}
\resizebox{\textwidth}{!}{
\begin{tabular}{l c c c c c c c c}
\toprule
\textbf{Method} & \textbf{AIME24} & \textbf{AIME25} & \textbf{HMMT25} & \textbf{MATH500} & \textbf{Minerva} & \textbf{GSM8K} & \textbf{Avg.} & \textbf{AvgLen} \\
\midrule
No train & 65.56 & 49.17 & \textbf{37.50} & \textbf{87.00} & \underline{33.09} & 91.49 & 60.63 & 8{,}382 \\
SFT & \textbf{68.06} & 49.17 & 30.00 & 86.35 & 32.35 & 91.64 & 59.60 & 8{,}360 \\
GRPO~\citep{shao2024deepseekmath} & 67.50 & \underline{50.83} & 34.17 & 86.85 & 32.31 & 91.49 & 60.52 & 8{,}218 \\
OPSD~\citep{opsd} & \underline{67.78} & 49.17 & 35.83 & 86.15 & 32.35 & 89.48 & 60.13 & 8{,}259 \\
CRISP~\citep{opsdc} & 67.78 & 49.17 & 35.83 & 86.70 & \textbf{33.09} & \underline{91.94} & \underline{60.75} & \textbf{7{,}908} \\
\rowcolor{ourcolor}
\textbf{CL-EGRSD} ($\gamma{=}0.5$) & 67.22 & \textbf{52.50} & \underline{35.83} & \underline{86.90} & 32.35 & \textbf{93.91} & \textbf{61.45} & \underline{7{,}943} \\
\bottomrule
\end{tabular}
}
\end{table}

The no-training AceReason model is already strong (60.63), and most trained baselines fail to improve over it: SFT degrades to 59.60, while GRPO (60.52), OPSD (60.13), and CRISP (60.75) cluster around the no-training level. CL-EGRSD at $\gamma=0.5$ is the only trained method in this diagnostic that clearly exceeds the no-training baseline, reaching 61.45 avg@4 (+0.82 over no-train) with shorter completions (7{,}943 vs. 8{,}382 tokens) and the highest GSM8K score (93.91, +2.42). This suggests that on reasoning-tuned external bases with a different post-training recipe, uniform-weight distillation objectives struggle to add value, while the entropy-gated update preserves a useful learning signal. The entropy coefficient likely needs to be retuned per base, since the same $\gamma$ selected for Qwen3 need not transfer unchanged.

\section{Weak-base cross-architecture diagnostic on Olmo-3-7B Base}
\label{app:olmo_base_diag}

We also run a weak-base cross-architecture diagnostic on \textbf{Olmo-3-7B Base} ~\citep{groeneveld2024olmo}, a non-reasoning-tuned external base model. Absolute performance is much lower than on Qwen3 or on reasoning-tuned external bases, so this experiment is not intended as a headline accuracy comparison. Instead, it tests negative transfer: whether an on-policy distillation objective can avoid making a weak external base worse. We report all six benchmark scores, the macro average, and the average generation length in Table~\ref{tab:olmo_base_diag_full}.

\begin{table}[!h]
\centering
\small
\caption{Weak-base diagnostic on Olmo-3-7B Base. All accuracy metrics are percentages. Best and runner-up values are shown in \textbf{bold} and \underline{underline}.}
\label{tab:olmo_base_diag_full}
\resizebox{\textwidth}{!}{
\begin{tabular}{l c c c c c c c c}
\toprule
\textbf{Method} & \textbf{AIME24} & \textbf{AIME25} & \textbf{HMMT25} & \textbf{MATH500} & \textbf{Minerva} & \textbf{GSM8K} & \textbf{Avg.} & \textbf{AvgLen} \\
\midrule
No train & 8.33 & \textbf{13.33} & \textbf{7.50} & 50.65 & 10.48 & 62.64 & 25.49 & \underline{7{,}203} \\
SFT & \underline{14.17} & 10.83 & 5.00 & \underline{51.80} & \textbf{12.04} & \textbf{65.26} & \underline{26.52} & 7{,}845 \\
GRPO~\citep{shao2024deepseekmath} & \textbf{16.67} & 9.17 & \underline{5.83} & 50.75 & 10.48 & 62.81 & 25.95 & 7{,}585 \\
OPSD~\citep{opsd} & 13.06 & \textbf{13.33} & 5.00 & \textbf{53.90} & 10.57 & 55.67 & 25.25 & 7{,}311 \\
CRISP~\citep{opsdc} & 8.33 & 10.83 & \textbf{7.50} & 50.00 & 10.11 & 55.27 & 23.67 & \textbf{6{,}432} \\
\rowcolor{ourcolor}
\textbf{CL-EGRSD} & \textbf{16.67} & \underline{12.50} & \underline{5.83} & {51.15} & \underline{10.85} & \underline{63.34} & \textbf{26.72} & {7{,}439} \\
\bottomrule
\end{tabular}
}
\end{table}

\begin{table}[h]
\centering
\small
\caption{Aggregate deltas for the Olmo-3-7B Base diagnostic. $\Delta_{\rm base}$ is relative to the no-training model. $\Delta_{\rm SFT}$ is relative to SFT on the same base.}
\label{tab:olmo_base_delta}
\begin{tabular}{l c c c}
\toprule
\textbf{Method} & \textbf{Avg.} & \textbf{$\Delta_{\rm base}$} & \textbf{$\Delta_{\rm SFT}$} \\
\midrule
No train & 25.49 & -- & $-1.03$ \\
SFT & 26.52 & $+1.03$ & -- \\
GRPO~\citep{shao2024deepseekmath} & 25.95 & $+0.46$ & $-0.57$ \\
OPSD~\citep{opsd} & 25.25 & $-0.24$ & $-1.27$ \\
CRISP~\citep{opsdc} & 23.67 & $-1.82$ & $-2.85$ \\
\rowcolor{ourcolor}
\textbf{CL-EGRSD} & \textbf{26.72} & \textbf{$+1.23$} & \textbf{$+0.20$} \\
\bottomrule
\end{tabular}
\end{table}

As summarized in Table~\ref{tab:olmo_base_delta}, CL-EGRSD is the only trainable method in this weak-base diagnostic that improves over both the no-training model and SFT. In contrast, OPSD and CRISP exhibit negative transfer relative to the no-training base, and GRPO remains below SFT. This supports the interpretation that the entropy gate stabilizes training under weak external bases by attenuating harmful token-level updates when the privileged teacher signal is less reliable for the student's native distribution.

\phantomsection
\let\addcontentsline\originaladdcontentsline
\addcontentsline{toc}{section}{Limitations}
\let\addcontentsline\relax
\def\addcontentsline#1#2#3{}
\section*{Limitations}
\label{sec:limitations}

Our evaluation is restricted to mathematical reasoning, and transfer to code, agentic reasoning, or open-ended reasoning remains untested. The mechanism diagnostics use held-out benchmarks and therefore condition the privileged teacher on a proxy reference solution rather than the original training-set reference. A further limitation is that teacher Shannon entropy is only an operational confidence measure and an imperfect proxy for token-level supervision reliability: high entropy can arise from epistemic ambiguity in the reasoning path, but also from aleatoric variation such as synonymous phrasing or vocabulary-level underspecification. Relatedly, EGRSD and CL-EGRSD introduce additional hyperparameters ($\gamma$ and, for CL-EGRSD, the lookahead window $W$) beyond those of existing OPSD-style objectives. The per-model settings we use are listed in Appendix~\ref{app:recipe}, and a scale-aware or schedule-based $\gamma$ controller is a natural direction for future work.

\section{Algorithm}
\label{app:algorithm}

Algorithm~\ref{alg:egrsd} summarizes a single training step of EGRSD / CL-EGRSD. The teacher $p_T$ is the frozen base model conditioned on the privileged context $(x,s^\star)$, and the student $p_S=p_\theta$ is conditioned on $(x)$ only. The teacher's weights are not updated through LoRA. Entropy normalization is shared across tokens in the current minibatch, and the lookahead window $W$ distinguishes EGRSD ($W{=}0$) from CL-EGRSD ($W{>}0$). Code, LoRA adapters, and evaluation artifacts will be released publicly.

\begin{algorithm}[h]
\small
\caption{EGRSD / CL-EGRSD training step (shared direction--magnitude--confidence form).}
\label{alg:egrsd}
\begin{algorithmic}[1]
\Require Batch of problems $\{x_i\}$ with privileged solutions $\{s^\star_i\}$; student $p_\theta$; frozen teacher $p_T$ (shared backbone, adapter disabled); entropy coefficient $\gamma$; lookahead window $W$ (set $W{=}0$ for EGRSD); clip $\varepsilon$; length-shape $\beta_L$; running reward stats $(\bar r,\sigma_r)$.
\For{each problem $x_i$ in batch}
    \State Sample on-policy completion $y_i \sim p_\theta(\cdot \mid x_i)$ via vLLM. \Comment{Rollout}
    \State Compute reward $r_i \gets \mathbb{1}[y_i\text{ correct}]\cdot\big(1+\beta_L(1-|y_i|/L_{\max})\big)$.
    \State Whiten advantage $A_i \gets (r_i - \bar r)/\sigma_r$; set direction $D_i \gets \mathrm{sign}(A_i)$.
\EndFor
\State Forward pass: collect $\log p_T(y_{i,t}\mid x_i,s^\star_i,y_{i,<t})$ and $\log p_S(y_{i,t}\mid x_i,y_{i,<t})$; use \texttt{stop-gradient} on $p_S$.
\ForAll{completion tokens $(i,t)$}
    \State Compute teacher entropy $H_{i,t} \gets -\sum_v p_T(v\mid \cdot)\log p_T(v\mid \cdot)$.
\EndFor
\If{CL-EGRSD ($W>0$)}
    \State $H_{i,t} \gets \min_{j\in[t,\,\min(t+W,T_i)]} H_{i,j}$ \Comment{Causal lookahead replacement.}
\EndIf
\State Batch-normalize: $\widehat{H}_{i,t} \gets H_{i,t} / \max\!\big(\max_{(j,k)} H_{j,k},\,1\big)$.
\State Confidence gate: $\omega_{i,t} \gets \mathrm{clip}(1-\gamma\,\widehat{H}_{i,t},\,0.1,\,1)$.
\State Directional log-ratio: $\delta_{i,t} \gets \log p_T(y_{i,t}\mid\cdot) - \log p_S(y_{i,t}\mid\cdot)$.
\State Magnitude: $w_{i,t} \gets \mathrm{clip}\!\big(\exp(D_i\,\delta_{i,t}),\,1-\varepsilon,\,1+\varepsilon\big)$.
\State Token-level advantage: $\widehat{A}_{i,t} \gets A_i\cdot w_{i,t}\cdot \omega_{i,t}$ (all stop-gradient).
\State Loss: $\mathcal{L} \gets -\tfrac{1}{\sum m_{i,t}}\sum m_{i,t}\,\widehat{A}_{i,t}\,\log p_\theta(y_{i,t}\mid x_i,y_{i,<t})$.
\State Gradient step: $\theta \gets \theta - \eta\,\mathrm{clip\text{-}norm}(\nabla_\theta\mathcal{L},\,0.1)$.
\State Update running reward statistics $(\bar r,\sigma_r)$ via Welford's algorithm.
\end{algorithmic}
\end{algorithm}

\end{document}